\title{Markov Decision Processes with Time-Varying Geometric Discounting}
\author[1]{Jiarui Gan}
\author[2]{Annika Hennes}
\author[3]{Rupak Majumdar}
\author[3]{\\Debmalya Mandal}
\author[3]{Goran Radanovic}
\affil[1]{University of Oxford}
\affil[2]{Heinrich-Heine-University D\"usseldorf}
\affil[3]{Max Planck Institute for Software Systems}
\date{}
\declaretheorem[numberwithin=section]{theorem}
\declaretheorem[sibling=theorem]{lemma, definition, observation, proposition}
\DeclarePairedDelimiter\abs{\lvert}{\rvert}%
\DeclarePairedDelimiter\norm{\lVert}{\rVert}%
\tikzset{
	state/.style={
		circle,
		draw=black,
		fill=gray!30,
		%fill opacity=0.3,
		text opacity=1,
		inner sep=0pt,
		minimum size=23pt,
		font=\small},
	->, % makes the edges directed
	>=Stealth, % makes the arrow heads bold
	node distance=1.8cm, % specifies the minimum distance between two nodes. Change if necessary.
}
\newcommand{\calM}{\mathcal{M}}
\newcommand{\calG}{\mathcal{G}}
\DeclareMathOperator{\argmax}{arg\,max}
\newcommand{\E}{\mathbb{E}}
\newcommand{\poly}{poly}
\newcommand{\valit}{\textsc{ValIt}}
\newcommand{\sstart}{s_{\text{start}}}
\newcommand{\expct}[1]{\mathbb{E}\bigl[#1\bigr]}
\newcommand{\abso}[1]{\left| #1 \right|}
\newcommand{\vargamma}{g}
\newcommand{\ppi}{{\boldsymbol\pi}}
\newcommand{\tgamma}{\tilde{\gamma}}
\begin{document}
	
	\maketitle
	
	\begin{abstract}
		Canonical models of Markov decision processes (MDPs) usually consider geometric discounting based on a constant discount factor. While this standard modeling approach has led to many elegant results, some recent studies indicate the necessity of modeling time-varying discounting in certain applications. This paper studies a model of infinite-horizon MDPs with time-varying discount factors. We take a game-theoretic perspective---whereby each time step is treated as an independent decision maker with their own (fixed) discount factor---and we study the {\em subgame perfect equilibrium} (SPE) of the resulting game as well as the related algorithmic problems. We present a constructive proof of the existence of an SPE and demonstrate the EXPTIME-hardness of computing an SPE. We also turn to the approximate notion of $\epsilon$-SPE and show that an $\epsilon$-SPE exists under milder assumptions. An algorithm is presented to compute an $\epsilon$-SPE, of which an upper bound of the time complexity, as a function of the convergence property of the time-varying discount factor, is provided. 
	\end{abstract}
	
	\section{Introduction}
	
	Ever since Samuelson's foundational work introduced the discounted utility theory \citep{samuelson1937note}, discounted utility models have played a central role in sequential decision making. Building on earlier work that recognized the influence of time on individuals' valuations of goods (e.g., see \citep{rae1905sociological,jevons1879theory,von1922capital,fisher1930theory} and the discussion in \citep{loewe2006development}), Samuelson proposed a utility model in which a decision maker attempts to optimize the discounted sum of their utilities with a constant discount factor applied in every time step; this is known as geometric or exponential discounting.
	
	Geometric discounting leads to many elegant and well-known results.
	In the context of Markov decision processes (MDPs) \citep{Puterman1994}, it results in the decision maker's preferences over the policies being invariant over time. Moreover, it is key to the existence and polynomial-time computability of an optimal policy. These results have contributed greatly to the popularity and wide applicability of the MDPs.
	Nevertheless, in many applications, in particular those pertaining to human decision making under uncertainty, time-varying discount factors are essential for capturing long-run utilities. 
	For example, it is shown in laboratory settings that human decision makers often exhibit time-inconsistent behavior: people prefer \$50 in three years plus three weeks to \$20 in three years, yet prefer \$20 now over \$50 in three weeks \citep{green1994temporal}.
	Such behaviors are better explained through time-varying discount factors.
	Unfortunately, many of the aforementioned results break with time-varying discounting.  
	As \citet{strotz1955myopia} showed, geometric discounting with a constant discounting factor is the only discount function that satisfies dynamic- or time-consistency. 
	
	In this paper, we study a model of infinite-horizon MDPs with time-varying geometric discounting. 
	Our model seizes the idea of geometric discounting, but generalizes the discount factor to a function of time. 
	In each time step, the function produces a discount factor, and the agent's incentive is defined by the geometrically discounted sum of its future rewards with respect to this discount factor.
	Hence, the agent aims at optimizing a different objective in each time step. 
	This changing incentive gives rise to a game-theoretic approach, proposed and studied in a series of works in the literature \citep{strotz1955myopia,pollak1968consistent,peleg1973existence,lattimore2014general,jaskiewicz2021markov,lesmana2021subgame}.
	Via this approach, the behavior of the sole agent in the process is interpreted as playing against its future selves in a sequential game. Analyzing the {\em subgame perfect equilibrium} (SPE) is therefore a naturally associated task, which we aim to address in this paper.
	
	\begin{figure*}[t]
		\centering
		\begin{subfigure}[t]{0.32\textwidth} 
			\centering
			\includegraphics[width=\textwidth]{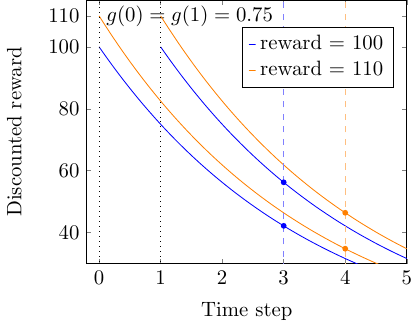}
			\caption{Standard geometric discounting.} 
			\label{fig:example-decision-problem:standard-discounting}
		\end{subfigure} 
		\hfill
		\begin{subfigure}[t]{0.32\textwidth}
			\centering
			\includegraphics[width=\textwidth]{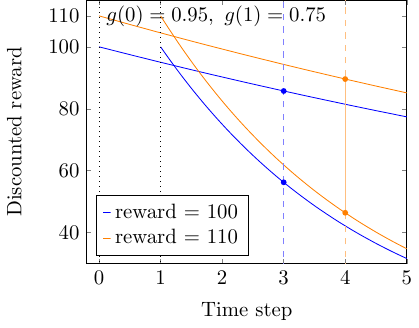}
			\caption{Time-varying discounting.} 
			\label{fig:example-decision-problem:time-varying-discounting}
		\end{subfigure} 
		\hfill
		\begin{subfigure}[t]{0.32\textwidth}
			\centering
			\raisebox{11mm}{
				\includegraphics[width=0.95\textwidth]{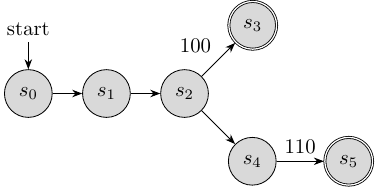}
			}
			\caption{MDP (rewards are $0$ if unspecified)}
			\label{fig:example-decision-problem:mdp}
		\end{subfigure}
		\caption{Time-consistent vs. time-inconsistent discounting. 
			The blue (resp., orange) curves in (a) and (b) show how the agent values getting a reward of $100$ (resp., $110$) in future time steps. We examine the player's preferences at the beginning ($s_0$) and one time step later ($s_1$). 
			The colored dots correspond to values of the two possible outcomes of the MDP in (c).
		}
		\label{fig:example-decision-problem}
	\end{figure*}
	
	More concretely, \Cref{fig:example-decision-problem} presents an example that compares the behavior of this model to the standard model of geometric discounting, illustrating how time-varying geometric discounting can lead to time-inconsistency.
	In the example, the agent
	has to decide between getting a reward of $100$ at time step 3 (option A) and getting a slightly increased reward of $110$ one time step later (option B). 
	This decision problem is captured by the MDP in \Cref{fig:example-decision-problem:mdp}.
	Under the standard geometric discounting, the preference order of these two outcomes does not change over time: as illustrated in \Cref{fig:example-decision-problem:standard-discounting}, an agent that discounts its future with a constant factor $0.75$ would always prefer option A, no matter at time step 0 or 1.
	This is not anymore the case with time-varying discounting. 
	An agent who applies a discount factor of $0.95$ is more farsighted and prefers the higher but delayed reward (i.e. option B). But if the agent becomes more myopic one time step later on by applying a reduced discount factor of $0.75$, the preference order would change, and the agent would no longer want to stick to its initial plan. This situation is illustrated in \Cref{fig:example-decision-problem:time-varying-discounting}.
	
	Time-inconsistent behavior may result from different forms of discounting.
	It may arise from a consistent way of planning but an inconsistent treatment of future time steps. 
	In hyperbolic discounting, for example, the agent assigns a fixed sequence of varying discount factors to future time steps relative to the current step.
	In other words, the agent plans with consistency (using the same sequence over time) but treats the future inconsistently (discounting different time steps differently). 
	In contrast, in our model, the agent discounts the future using a constant factor, but this factor might change over time. Human beings, for example, may start with very low discount factors when they are young, but increasingly think more about the future as they grow older.
	Conceivably, a young person, probably through observing this in elder people, knows that their way of discounting will change when they go into middle age. But still, they cannot do anything about their urge for immediate rewards.
	Likewise, the agent in our model knows how its rate changes in the future and tries to find a compromise between all the different preferences. This motivates the use of the SPE as our solution concept.

	\subsection{Contributions}
	
	Besides introducing a model of MDP with time-varying discounting, we make the following technical contributions. 
	\begin{itemize}
		\item We present a constructive proof for the existence of an SPE.
		Our proof differs from another non-constructive approach in the literature \citep{lattimore2014general} which uses the compactness of the underlying space to argue about convergence. 
		
		\item
		From our constructive proof, an algorithm for computing an SPE can be readily extracted. Meanwhile, we demonstrate that the problem of computing an SPE is EXPTIME-hard even in restricted settings. 
		
		\item 
		In order to circumvent some of the assumptions needed to construct an exact SPE, we turn to the relaxed notion of $\epsilon$-SPE. We show that an $\epsilon$-SPE exists under strictly milder assumptions and present an algorithm to compute an $\epsilon$-SPE.
		Using a continuity argument of the value functions, we also derive an upper bound on the time complexity of the algorithm, as a function of the convergence property of the time-varying discount factors.
	\end{itemize}

	\subsection{Related Work}
	
	A large body of experimental evidence suggests that human behavior is not characterized by geometric discounting with a constant discount factor. 
	Empirical findings that do not support the hypothesis that discounting is consistent over time have been reported \citep{thaler1981some,benzion1989discount,redelmeier1993time,green1994temporal,kirby1995preference,millar1984self}, implying dynamic inconsistency of human preferences.
	Prior work has also proposed and studied different forms of discounting, such as hyperbolic~\citep{herrnstein1961relative,ainslie1992picoeconomics,loewenstein1992anomalies} or quasi-hyperbolic discounting~\citep{phelps1968second,laibson1997golden}, which are considered to be more aligned with human behavior~\citep{ainslie1975specious,green1994discounting,kirby1997bidding}.
	Interpretations of discounting functions as uncertainty over hazard rates were also proposed \citep{sozou1998hyperbolic,dasgupta2005uncertainty}. 
	
	Focusing on sequential decision making under uncertainty, this paper closely relates to the line of work that studies non-geometric discount factors and dynamic inconsistency in Markov decision processes and stochastic games~\citep{shapley1953stochastic,alj1983dynamic,Puterman1994,nowak2010noncooperative,jaskiewicz2021markov,lesmana2021subgame}. 
	Some recent works studied dynamic inconsistency using a game-theoretic framework akin to the ones by~\citet{strotz1955myopia,pollak1968consistent,peleg1973existence}, where their focus is on the existence of an equilibrium in randomized stationary Markov perfect strategies \citep{jaskiewicz2021markov}, or an SPE in a finite horizon setting \citep{lesmana2021subgame}.
	Arguably, the closest work to ours is the work of \citet{lattimore2014general}, which considers ``age-dependent'' (time-varying) geometric discounting functions. The characterization results therein prove the existence of an SPE in the resulting game. Our results are complementary: we provide a constructive proof of the existence result and additionally study the computational complexity of the problem setting.

	MDP-based settings similar to ours have also been considered in reinforcement learning \citep{sutton1995td,sutton2011horde,white2017unifying,pitis2019rethinking,fedus2019hyperbolic}.
	Increasing the efficacy of learning by using multiple discount factors has been explored previously \citep{burda2018exploration,romoff2019separating}. It is also worth mentioning settings that use a weighted reward criterion (e.g., \citet{filar2012competitive}), where the objective can be expressed as the weighted sum of two value functions with different discount factors. 
	
	\section{The Model}
	
	We consider an infinite horizon MDP $\calM = (S, A, R, P, \sstart, \gamma)$, where
	$S$ is a finite state space of the environment, with $\abs{S} = n$, and  
	$A = \bigcup_{s\in S} A_s$ is a union of finite action spaces, with each $A_s$ being the set of actions available in state $s$.
	Moreover, $R \colon S \times A \to \mathbb{R}$ is a reward function, such that when action $a$ is taken in state $s$, a reward $R(s,a)$ will be generated, and the state of the environment transitions according to the transition function $P\colon S \times A \to S $, with probability $P(s,a,s')$ to another state $s' \in S$.
	Finally, $\sstart \in S$ is a starting state, and $\gamma \in [0,1)$ is a discount factor that is applied for defining the {\em cumulative reward} of a policy $\pi$, i.e., the discounted sum of rewards obtained over an infinite horizon:
	\begin{equation}
		\label{eq:cul-reward}
		\E\left[\sum_{t=0}^{\infty} \gamma^{t}\cdot R(s_{t}, a_{t}) \biggm\vert s_0 = \sstart, \pi \right]\thinspace,
	\end{equation} 
	where the expectation is over the trajectory $(s_t, a_t)_{t=0}^\infty$ generated by starting from $\sstart$ and following $\pi$ subsequently.
	
	Two types of policies will be of interest in this paper: {\em static policies} and {\em dynamic policies}.
	A static policy $\pi: S \to A$ assigns a (deterministic) action to each state,
	so that using policy $\pi$, an agent performs action $\pi(s)$ whenever the environment is in state $s$, irrespective of the time step.
	In contrast, a dynamic policy is time-dependent and is defined as a sequence $\ppi = (\pi_t)_{t=0}^\infty$ of static policies. At each time step $t$, the static policy $\pi_t$ is employed to determine the action to take. Hence, dynamic policies are a generalization of static policies.\footnote{More generally, a static policy can also choose randomized actions, i.e., $\pi: S \to \Delta(A)$. Nevertheless, it is without loss of generality to consider only deterministic policies with respect to all the results in our paper. Hence, unless otherwise clarified, all static policies considered are deterministic ones, whereas we do allow the agent to use randomized static policies.}
	
	One may also consider policies that depend on the history (i.e., the trajectory of states and actions $s_0,a_0,s_1,a_1,\ldots,s_{t-1},a_{t-1}$ generated so far), but as it shall be clear this is unnecessary for the problems studied in this paper: the underlying process is Markovian and the agent always observes the state of the environment.
	Moreover, it is well-known that when the discount factor $\gamma$ is a constant, to maximize the cumulative reward defined in \eqref{eq:cul-reward} it suffices to consider static policies. Though this does not hold when the optimization horizon is finite \citep{shirmohammadi2019complexity} or, as we will study in this paper, when $\gamma$ varies with time.

	\subsection{Constant Discount Factor and Optimality}
	
	When a constant discount factor is applied, the optimality of a static policy with respect to \eqref{eq:cul-reward} can be characterized using the value function $V_{\gamma}^{\pi}$ defined as
	\begin{align}
		\label{eq:v-func-const-gamma}
		V_{\gamma}^{\pi}(s) 
		\coloneqq Q_{\gamma}^{\pi}(s, \pi(s))
	\end{align}
	for all $s \in S$,
	where
	\begin{align}
		\label{eq:Q-func-const-gamma}
		Q_{\gamma}^{\pi}(s, a)
		\coloneqq&\ R(s,a) + \gamma \cdot \mathbb{E}_{s' \sim P(s, a, \cdot)} V_\gamma^\pi(s') \nonumber \\
		=&\ R(s,a)  + \gamma\cdot\sum_{s'\in S}P(s,a, s')\cdot V_{\gamma}^{\pi}(s') 
	\end{align}
	is called the Q-function.
	The value of $V_{\gamma}^{\pi}(s)$ corresponds to the expected sum of rewards when starting in state $s$ and following policy $\pi$.
	A static policy $\pi^*$ is optimal if for every state $s$ and every action $a\in A$, it holds that 
	\begin{equation} 
		\label{eq:V-pi-gamma}
		V_{\gamma}^{\pi^*}(s) = \max_{a \in A_s}\ Q_\gamma^{\pi^*}(s, a)\thinspace.
	\end{equation}
	We denote by $\Pi$ the set of all static policies, and by $\Pi_{\gamma}^*$ the set of all optimal policies with respect to a constant $\gamma$.
	It is well known that $\Pi_{\gamma}^* \neq \emptyset$ for all $\gamma \in [0,1)$, and one can compute a policy in $\Pi_{\gamma}^*$ in polynomial time~\citep{rincon2003existence}.
	
	It will also be useful to introduce the notion of equivalent policies.
	Two policies are deemed equivalent if their value functions are identical for all states and discount factors.
	
	\begin{definition}[Equivalent policies]
		\label{def:equi-policies}
		Two static policies $\pi_1, \pi_2 \in \Pi$ are {\em equivalent} if for all $s\in S$ and all $\gamma \in [0, 1)$ it holds that 
		$V_{\gamma}^{\pi_1}(s) = V_{\gamma}^{\pi_2}(s)$.
		We write $\pi_1 \sim \pi_2$ if $\pi_1$ and $\pi_2$ are equivalent.
	\end{definition}

	\subsection{Time-Varying Discounting---Game-Theoretic View}

	\newlength{\mywidth}
	\setlength{\mywidth}{0.3\textwidth}
	\begin{figure*}[t]
		\centering
		\begin{subfigure}[t]{\mywidth}
			\centering
			\includegraphics[width=\linewidth]{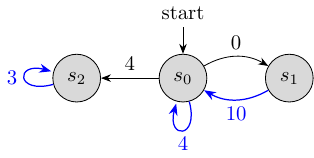}
			\caption{}\label{fig:example-gamma-1}
		\end{subfigure}
		\hfill
		\begin{subfigure}[t]{\mywidth}
			\centering
			\includegraphics[width=\linewidth]{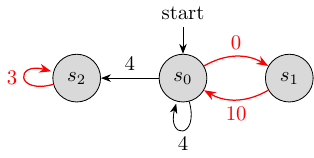}
			\caption{}
			\label{fig:example-gamma-2}
		\end{subfigure}
		\hfill
		\begin{subfigure}[t]{\mywidth}
			\centering
			\includegraphics[width=\linewidth]{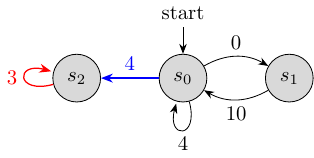}
			\caption{}
			
			\label{fig:example-spe}
		\end{subfigure}
		\caption{Optimal static policies for geometric discounting with different values of $\gamma$ and an SPE. (a) and (b) illustrate the optimal policies for $\gamma_1=0.1$ ({blue}) and $\gamma_2 = 0.8$ ({red}), respectively. (c) illustrates an SPE under time-varying discounting, with $\vargamma(0) = \gamma_1 $ and $\vargamma(i) = \gamma_2$ for all $i\geq 1$. The action chosen by player 0 is depicted in {blue}, the actions chosen by all subsequent players are in {red}.}
		\label{fig:example}
	\end{figure*}

	We generalize the above definition to {\em MDPs with time-varying discounting} (hereafter, MDPs for simplicity) by replacing the constant factor $\gamma$ by a {\em discount function} $\vargamma:\mathbb{N} \to [0,1)$, such that $\vargamma(t)$ is the discount factor the agent applies at time step $t$. 
	We will only consider discount functions that converge to a value in $[0,1]$ when $t \to \infty$ in this paper.
	
	Time-varying discounting changes the agent's incentive over time and as a result the agent behaves as if they are different agents. 
	Hence, we apply a game-theoretic view and view the MDP as a sequential game played by countably many players. Every player is associated with a time step $t\in \mathbb{N}$ and decides on a static policy $\pi_t$ to use at that particular time step. Moreover, player $t$ represents the agent's incentive at time step $t$ and cares about the subsequent cumulative reward with respect to the (constant) discount factor $\vargamma(t)$, i.e.,
	\begin{equation}
		\label{eq:u-t}
		u_t(\ppi|s) \coloneqq \E\left[\sum_{t'=t}^{\infty} \vargamma(t)^{t' - t}\cdot R(s_{t'}, a_{t'}) \biggm\vert s_t = s, \ppi \right]
	\end{equation} 
	when the environment is in state $s$ before player $t$ is to take an action, and the other players $t+1, t+2, \dots$ subsequently act according to $\pi_{t+1}, \pi_{t+2}, \dots$ given by the dynamic policy $\ppi =(\pi_t)_{t=0}^\infty$.
	In other words, each player has the same geometric-discounting-style vision as that defined in \eqref{eq:cul-reward}. 
	The function $u_t$ can be viewed as the utility function of player $t$ conditioned on $s_t$, and $\ppi$ as the players' strategy profile.
	The discount factor stays constant for this particular player, but it might be different for different players.
	We will analyze the {\em subgame-perfect equilibrium} (SPE) of the resulting game, which is the standard solution concept for sequential games \citep{osborne2004introduction}.
	
	\begin{definition}[SPE]
		A dynamic policy $\ppi = (\pi_t)_{t=0}^\infty$ is an SPE if for all $t \in \mathbb{N}$ and $s \in S$ it holds that: $u_t(\ppi|s) \ge u_t(\ppi'|s)$
		if $\pi'_i = \pi_i$ for all $i \in \mathbb{N} \setminus \{t\}$.
	\end{definition}
	
	In other words, in an SPE, from any time step $t$ onward, the players' policies form a Nash equilibrium of the subsequent subgame, no matter what $s_t$ is.
	Note that the above definition takes the same form as a Nash equilibrium of the players' policies because every player $t$ only plays at time step $t$ throughout the game.

	We can use value functions to characterize an SPE:
	a dynamic policy $\ppi^*$ is an SPE if it holds that
	\begin{align} 
		\label{eq:SPE-v-func}
		V_{\vargamma(t), t}^{\ppi^*}(s) = \max_{a \in A_s}\ Q_{\vargamma(t),t}^{\ppi^*} (s, a)\thinspace,
	\end{align}
	for all $t =0,1,\dots$ and $s \in S$,
	where for any $\ppi$ we define
	\begin{flalign}
		\label{eq:v-func-vargamma}
		V_{\gamma, t}^{\ppi}(s) &\coloneqq Q_{\gamma, t}^{\ppi}(s, \pi_t(s)), \text{ and }\\
		\label{eq:Q-func-vargamma}
		\
		Q_{\gamma, t}^{\ppi}(s, a)
		&\coloneqq R(s,a)  + \gamma\! \sum_{s'\in S}P(s,a, s') V_{\gamma, t+1}^{\ppi}(s')\thinspace.\!\!\!
	\end{flalign}
	Namely, each player $t$ has a value function $V_{\vargamma(t),i}^\ppi$ and a Q-function $Q_{\vargamma(t),i}^\ppi$ for each time step $i$, defined with respect to their own discount factor $\vargamma(t)$.
	We can make two observations below: the first observation follows by definition (i.e., \eqref{eq:u-t}), and the second holds as the dynamic policy essentially degenerates to a static one in the stated situation.
	
	\begin{observation}
		\label{obs:ut-Vgt}
		$u_t(\ppi|s) \equiv V_{\vargamma(t), t}^{\ppi}(s)$ for all $s \in S$.
	\end{observation}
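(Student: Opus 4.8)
The plan is to unroll the recursive definition of the value function and identify the result with the infinite discounted sum that defines $u_t$. Fix a time step $t$ and abbreviate $\gamma = \vargamma(t)$; note that $\gamma \in [0,1)$ by assumption on the discount function. Substituting \eqref{eq:Q-func-vargamma} into \eqref{eq:v-func-vargamma}, the value $V_{\gamma,t}^{\ppi}(s)$ satisfies the one-step recursion
\begin{equation*}
V_{\gamma,t}^{\ppi}(s) = R(s,\pi_t(s)) + \gamma \sum_{s'\in S} P(s,\pi_t(s),s')\, V_{\gamma,t+1}^{\ppi}(s').
\end{equation*}
The crucial feature is that the \emph{same} discount factor $\gamma$ appears at every level of this recursion (the subscript $\vargamma(t)$ is frozen), which is exactly what makes $V_{\gamma,t}^{\ppi}$ the constant-$\gamma$ evaluation used in \eqref{eq:u-t}, as opposed to mixing in the discount factors of later players.

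First I would establish, by induction on $k \geq 1$, the finite-horizon unrolling
\begin{equation*}
V_{\gamma,t}^{\ppi}(s) = \E\!\left[\sum_{t'=t}^{t+k-1} \gamma^{t'-t} R(s_{t'},a_{t'}) \,\middle\vert\, s_t = s, \ppi\right] + \gamma^{k}\, \E\!\left[V_{\gamma,t+k}^{\ppi}(s_{t+k}) \,\middle\vert\, s_t = s, \ppi\right],
\end{equation*}
where $a_{t'} = \pi_{t'}(s_{t'})$. The base case $k=1$ is the one-step recursion above. For the inductive step I would apply that same recursion to the term $V_{\gamma,t+k}^{\ppi}(s_{t+k})$ appearing in the remainder, and then use the tower property of conditional expectation (condition on $s_{t+k}$, average the transition $P(s_{t+k},a_{t+k},\cdot)$ to obtain the conditional expectation of $V_{\gamma,t+k+1}^{\ppi}$ at the next state) together with $\gamma\cdot\gamma^{k}=\gamma^{k+1}$. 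This absorbs one further reward $\gamma^{k}R(s_{t+k},a_{t+k})$ into the finite sum and advances the remainder to index $k+1$.

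The final step is to let $k \to \infty$. Setting $R_{\max} = \max_{s,a}\abs{R(s,a)}$, which is finite because $S$ and $A$ are finite, every value function is bounded by the geometric tail $\abs{V_{\gamma,i}^{\ppi}(s')} \leq R_{\max}/(1-\gamma)$, so the remainder term is at most $\gamma^{k} R_{\max}/(1-\gamma)$ in absolute value and vanishes as $k\to\infty$ since $\gamma<1$. Meanwhile the finite sum converges to $u_t(\ppi|s)$ by definition \eqref{eq:u-t}. Equating the two limits yields $V_{\vargamma(t),t}^{\ppi}(s) = u_t(\ppi|s)$, as claimed.

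The only non-routine point is this last convergence argument: I must know that $V_{\gamma,t}^{\ppi}$ is well defined as the (unique bounded) solution of the infinite recursion and that the remainder genuinely tends to zero. Both hinge on the standing assumption that the discount function takes values in $[0,1)$, so that $\gamma=\vargamma(t)<1$ strictly; this is what controls the geometric tail and legitimizes passing to the limit. Everything else reduces to bookkeeping with linearity and the tower property of expectation, consistent with the paper's remark that the observation ``follows by definition.''
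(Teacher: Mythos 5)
Your proof is correct and is essentially the paper's own reasoning made explicit: the paper offers no argument beyond declaring that the observation ``follows by definition (i.e., \eqref{eq:u-t})'', and your finite-horizon unrolling with the geometric tail bound $\gamma^{k}R_{\max}/(1-\gamma)\to 0$ is exactly the routine verification being elided there. Your closing remark about well-definedness (uniqueness of the bounded solution to the base-case-free recursion \eqref{eq:v-func-vargamma}--\eqref{eq:Q-func-vargamma}) is a fair point the paper glosses over, but it does not constitute a different approach.
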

	
	\begin{observation}
		\label{obs:V-constant-tail}
		Let $\ppi = (\pi_t)_{t=0}^\infty$ be a dynamic policy and $\pi \in \Pi$ be a static policy.
		If $\pi_t = \pi$ for all $t \ge T$, then
		$V_{\gamma,t}^\ppi(s) = V_{\gamma}^{\pi}(s)$ for all $t \ge T$ and all $s \in S$.
	\end{observation}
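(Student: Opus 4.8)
The plan is to show that, from time $T$ onward, the dynamic value function obeys exactly the Bellman recursion that pins down the static value function $V_\gamma^\pi$, and then to conclude by a uniqueness/convergence argument. Concretely, for every $t \ge T$ we have $\pi_t = \pi$, so combining \eqref{eq:v-func-vargamma} and \eqref{eq:Q-func-vargamma} gives
\[
V_{\gamma,t}^\ppi(s) = R(s,\pi(s)) + \gamma \sum_{s' \in S} P(s,\pi(s),s')\, V_{\gamma,t+1}^\ppi(s'),
\]
while by \eqref{eq:v-func-const-gamma}--\eqref{eq:Q-func-const-gamma} the static value function satisfies the analogous self-referential equation $V_{\gamma}^{\pi}(s) = R(s,\pi(s)) + \gamma \sum_{s'} P(s,\pi(s),s')\, V_{\gamma}^{\pi}(s')$. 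The two equations match except that the dynamic one links $V_{\gamma,t}^\ppi$ to $V_{\gamma,t+1}^\ppi$ rather than to itself.

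The first route I would take is to unroll the dynamic recursion into its explicit series form. Iterating the display $k$ times yields, for each $t \ge T$,
\[
V_{\gamma,t}^\ppi(s) = \E\!\left[\sum_{j=0}^{k-1}\gamma^{j} R(s_{t+j}, \pi(s_{t+j})) \;\Big|\; s_t = s\right] + \gamma^{k}\,\E\!\left[V_{\gamma,t+k}^\ppi(s_{t+k}) \;\big|\; s_t = s\right],
\]
where the trajectory is governed by the now time-invariant action rule $\pi$. Since $R$ is bounded on the finite set $S \times A$ and $\gamma < 1$, every dynamic value is uniformly bounded by $\norm{R}_\infty/(1-\gamma)$, so the remainder term vanishes as $k \to \infty$. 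The surviving series is the expected discounted reward of running the static policy $\pi$ from $s$; by the identical unrolling of the static Bellman equation and time-homogeneity of $P$, this limit equals $V_\gamma^\pi(s)$ and is in particular independent of $t$. This gives $V_{\gamma,t}^\ppi(s) = V_\gamma^\pi(s)$ for all $t \ge T$ and $s \in S$.

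An equivalent and slightly cleaner route is a pure contraction argument. Let $\mathcal{T}_\pi$ denote the policy-$\pi$ Bellman operator $(\mathcal{T}_\pi W)(s) = R(s,\pi(s)) + \gamma \sum_{s'} P(s,\pi(s),s') W(s')$, a $\gamma$-contraction in the sup-norm whose unique fixed point is $V_\gamma^\pi$. The first display says $V_{\gamma,t}^\ppi = \mathcal{T}_\pi V_{\gamma,t+1}^\ppi$ for $t \ge T$, hence $V_{\gamma,t}^\ppi = \mathcal{T}_\pi^{\,k} V_{\gamma,t+k}^\ppi$ and $\norm{V_{\gamma,t}^\ppi - V_\gamma^\pi}_\infty \le \gamma^{k}\, \norm{V_{\gamma,t+k}^\ppi - V_\gamma^\pi}_\infty$; combined with the uniform bound on the value functions, letting $k \to \infty$ forces the left side to $0$.

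The one subtlety worth flagging is that the dynamic recursion propagates forward in time, so there is no single fixed-point equation to apply directly to $V_{\gamma,t}^\ppi$ in isolation---the effective ``base case'' sits at $t = \infty$. The hard part is therefore not any algebra but justifying that this base case can be pushed to the limit; the contraction factor $\gamma < 1$ together with the uniform boundedness of the value functions (bounded rewards on a finite state-action space) is exactly what makes the tail term disappear. Everything else is routine unrolling.
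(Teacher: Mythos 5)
Your proposal is correct. The paper itself gives no real proof here: it dismisses the observation in a single clause (``the dynamic policy essentially degenerates to a static one in the stated situation''), i.e., the intended argument is purely semantic --- once $\pi_t=\pi$ for all $t\ge T$, the trajectory distribution from any $t\ge T$ onward is exactly that of the Markov chain induced by $\pi$, so the discounted sum in \eqref{eq:u-t} is literally the quantity defining $V_\gamma^\pi(s)$, with no Bellman machinery invoked. Your two routes formalize this through the recursions \eqref{eq:v-func-vargamma}--\eqref{eq:Q-func-vargamma} instead: unrolling with a vanishing tail, or viewing $V_{\gamma,t}^\ppi=\mathcal{T}_\pi V_{\gamma,t+1}^\ppi$ and appealing to the contraction property and uniqueness of the fixed point $V_\gamma^\pi$. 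The difference is not cosmetic: if one takes the forward recursion as the \emph{definition} of $V_{\gamma,t}^\ppi$, it does not determine the values uniquely --- e.g., with zero rewards, $V_{\gamma,t}(s)=c\gamma^{-t}$ solves the recursion for any $c$ --- so your contraction argument genuinely requires the uniform boundedness you invoke, which itself comes from the series interpretation (equivalently, from \Cref{obs:ut-Vgt}). So your flagged subtlety is exactly right: either one grounds the proof in the semantic definition (the paper's implicit route, in which case the claim is almost definitional), or one works with the recursion and must supply boundedness as a side condition; your write-up does the latter cleanly, at the cost of being longer than the paper deems necessary.
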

	
	We analyze the problem of computing an SPE.
	Since a solution to this problem is a dynamic policy over an infinite horizon, it is not immediately clear whether a solution admits any concise representation. We therefore consider only the first step ($t=0$) and the following decision problem:
	{\em For a given action $a \in A$, is there an SPE $\ppi$ such that $\pi_0(s_0) = a$?} (More formally, see the definition below.)
	We refer to this problem as {\sc SPE-Start}.
	
	\begin{definition}[{\sc SPE-Start}]
		An instance of {\sc SPE-Start} is given by a tuple $(\calM, a^\dag)$, consisting of an MDP $\calM = (S,A,R,P,\sstart,\vargamma)$ (with a time-varying discount function $\vargamma$) and an action $a^\dag \in A$. It is a yes-instance if there exists an SPE $\ppi$ such that $\pi_0(\sstart) = a^\dag$; and a no-instance, otherwise.
	\end{definition}
	
	It is straightforward that when $\vargamma$ is a constant function, an SPE corresponds to an optimal policy for the MDP. Yet, it appears that {\sc SPE-Start} is computationally more demanding than computing an optimal policy in a constant-discounting MDP: as we will show in the paper, {\sc SPE-Start} is EXPTIME-hard, whereas the latter is well-known to be solvable in polynomial time.

	\subsection{An Illustrating Example}
	\label{section:an-illustrating-example}
	
	We conclude our description of the model with an illustrating example.
	Consider the MDP given in \Cref{fig:example}, and two different discount factors, $\gamma_1 = 0.1$ and $\gamma_2 = 0.8$. Let $s_0$ be the starting state.
	As we are solely considering deterministic state transitions in this example, we will identify actions starting in a certain state with the state it changes to, e.g. $\pi(s_0) = s_1$ denotes the action that causes a transition from $s_0$ to $s_1$.
	
	As shown in \Cref{fig:example-gamma-1} and \Cref{fig:example-gamma-2}, the optimal static policy $\pi_{\gamma_1}^*$ of an agent who applies a constant discount factor $\gamma_1$ in the classical setting, is given by $\pi^{*}_{\gamma_1}(s_0) = s_0$, $ \pi^{*}_{\gamma_1}(s_1) = s_0 $ and $\pi^{*}_{\gamma_1}(s_2) = s_2$.
	For $\gamma_2$, the optimal static policy differs from  only in state $s_0$, namely $\pi^{*}_{\gamma_2}(s_0) = s_1$.
	
	We want to compute an SPE for the setting where the first player discounts with $\gamma_1$ and all subsequent players discount with $\gamma_2$. 
	Since the discount factor is constant from time step 1 onward, we can fix the policies of players $1,2,\dots$ to $\pi_{\gamma_2}^*$ to derive an SPE.
	Player $0$ knows all future players' policies and can influence future rewards merely by choosing the current action. Starting in $s_0$, the action given by policy $\pi_{\gamma_1}^*$ would be the one that transitions to itself. As player 0 is fairly short-sighted, knowing that player 1 will choose an action leading to reward $0$ in the subsequent time step, it prefers taking an immediate reward of 4 and transitioning into state $s_2$ to stay there forever and keep receiving rewards of 3.
	Hence, an SPE is given by $\ppi_{\text{SPE}} = (\pi', \pi^{*}_{\gamma_2}, \pi^{*}_{\gamma_2}, \ldots)$, where $ \pi'(s_0) = s_2 $; see \Cref{fig:example-spe}.

	\section{Existence of an SPE}
	
	We investigate the existence of an SPE. Indeed, the following result has already answered this question in affirmative.
	
	\begin{theorem}[\citealt{lattimore2014general}]
		An (exact) SPE always exists.
	\end{theorem}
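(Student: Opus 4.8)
The plan is to construct the SPE explicitly, separating the dynamic policy into a ``tail'' and a ``head.'' The starting point is the best-response reformulation in \eqref{eq:SPE-v-func}: player $t$'s requirement is that $\pi_t(s)$ maximize the Q-function $Q_{\vargamma(t),t}^{\ppi}(s,a)$ at every state $s$, and this Q-function depends on the later players' policies \emph{only} through the value function $V_{\vargamma(t),t+1}^{\ppi}$. Consequently, once the tail $(\pi_{t+1},\pi_{t+2},\dots)$ is fixed and forms an SPE of the subgame it induces, player $t$'s component is pinned down by a pointwise maximization over the finite action set $A_s$, which always admits a maximizer. Hence, by finite backward induction, it suffices to exhibit an SPE of the tail subgame from some time $T$ onward and then fill in $\pi_{T-1},\dots,\pi_0$ as successive best responses; each such step is effective and furnishes the algorithm promised in the contributions.

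The core of the construction is therefore the tail, and here I would exploit the convergence $\vargamma(t)\to\gamma^\star$ together with the finiteness of $\Pi$. Since each static value function $V_\gamma^{\pi}(s)$ is a rational (hence continuous) function of $\gamma$ on $[0,1)$, the set $\{\gamma : \pi\in\Pi_\gamma^\ast\}$ is a finite union of intervals, so a neighborhood of $\gamma^\star$ is partitioned into finitely many cells on which the optimal policy is constant. The key lever is \Cref{obs:V-constant-tail}: if a single static policy $\pi^\infty$ is optimal for every $\vargamma(t)$ with $t\ge T$, then the constant tail $\pi_t=\pi^\infty$ is itself an SPE of the tail subgame, because the constant-tail value function coincides with the static value $V_{\vargamma(t)}^{\pi^\infty}$, so every player $t\ge T$ indeed best-responds with $\pi^\infty$ (and \Cref{obs:ut-Vgt} then certifies the equilibrium condition in terms of the utilities $u_t$). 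The plan is thus to prove that such a stabilizing pair $(\pi^\infty,T)$ exists: any policy suboptimal at $\gamma^\star$ is, by continuity, suboptimal throughout a neighborhood of $\gamma^\star$ and hence is discarded for all large $t$, leaving only policies in $\Pi_{\gamma^\star}^\ast$ in contention.

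The step I expect to be the main obstacle is ruling out persistent oscillation. It is conceivable that $\vargamma$ approaches $\gamma^\star$ while crossing a policy-switching boundary infinitely often, so that two distinct policies alternate as the optimizer along the tail and no single $\pi^\infty$ is optimal for \emph{all} $t\ge T$; the constant-tail argument then breaks, and the tail reverts to a genuine infinite fixed-point problem in which $V_{\vargamma(t),t+1}^{\ppi}$ is the value of no single static policy. I would resolve this in one of two ways: either (i) show that among the finitely many policies in $\Pi_{\gamma^\star}^\ast$ one can always select a representative that stays optimal on the relevant side of each boundary, so that the tail is eventually constant up to equivalence in the sense of \Cref{def:equi-policies}; or (ii) absorb the residual oscillation through a bootstrapping argument that first fixes the limiting value function and then extracts the policies, using that the contribution of the far future is geometrically attenuated because $\vargamma(t)<1$. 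Controlling this attenuation well enough to force the backward recursion to an \emph{exact} rather than merely approximate equilibrium is the crux of the argument, and it is exactly where the hypothesis that $\vargamma$ converges becomes indispensable.
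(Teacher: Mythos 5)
The statement you are proving is the unconditional existence result of \citet{lattimore2014general}, which the paper itself does not reprove: it is established in that reference non-constructively, by extracting a limit from optimal policies of truncated, finite-horizon versions of the problem (a compactness argument), and it holds even when $\vargamma$ does not converge. What you construct instead is, almost step for step, the paper's own \emph{constructive} result (\Cref{thm:exist-SPE}): finiteness of the degenerate set $\Gamma$ via rationality of $\gamma\mapsto V_\gamma^\pi(s)$ (\Cref{lem:G-is-finite,lem:h-function-as-sum-of-rational-functions}), constancy of $\Pi_\gamma^*$ on intervals disjoint from $\Gamma$ (\Cref{lmm:opt-Pi-interval-G}), a constant tail $\tilde\pi\in\Pi^*_{\vargamma(T)}$, and finite backward induction for the head (Algorithm~\ref{alg:construct-SPE}). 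That argument is sound as far as it goes, but it yields existence only under the additional hypothesis $\gamma^\star\coloneqq\lim_{t\to\infty}\vargamma(t)\in[0,1]\setminus\Gamma$ --- exactly the hypothesis of \Cref{thm:exist-SPE}, and exactly what the general statement does not grant you.

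The gap is your third paragraph, and it is not a technicality: the degenerate case is the entire difference between the two theorems. Observe first that a constant tail $\pi_t=\pi$ for $t\ge T$ is an SPE of the tail subgame if and only if $\pi\in\Pi^*_{\vargamma(t)}$ for \emph{every} $t\ge T$: by \Cref{obs:V-constant-tail} the continuation values are $V^{\pi}_{\vargamma(t)}$, so the one-shot deviation condition at time $t$ is precisely static optimality \eqref{eq:V-pi-gamma} at discount $\vargamma(t)$. Now suppose $\gamma^\star\in\Gamma$ with, say, $\Pi^*_\gamma$ equal to the equivalence class of a policy $A$ for $\gamma$ slightly below $\gamma^\star$ and to the class of $B\not\sim A$ slightly above, the two classes being tied only at $\gamma^\star$ itself, and suppose $\vargamma(t)$ crosses $\gamma^\star$ infinitely often. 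Then no single policy lies in $\Pi^*_{\vargamma(t)}$ for all large $t$, so \emph{no eventually constant SPE exists at all}; your fix (i) --- selecting a representative of $\Pi^*_{\gamma^\star}$ that stays optimal ``on the relevant side of each boundary'' --- is therefore impossible, equivalence classes notwithstanding. Your fix (ii) is not yet an argument: making ``fix the limiting value function and then extract the policies'' precise in the presence of such oscillation requires producing a genuinely non-constant tail as a limit of solutions of truncated games, which is exactly the non-constructive compactness proof of \citet{lattimore2014general} that the paper cites. As written, your proof establishes \Cref{thm:exist-SPE}, not the cited theorem.
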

	
	The result applies even without our assumption on the convergence of the discount function but it is obtained via a non-constructive approach: by reasoning about a sequence of policies that are optimal for the truncated versions of the problem, each with a longer horizon than their predecessors.
	Hence, the proof does not yield any algorithm or procedure for obtaining an SPE. We next provide a constructive proof. The proof also forms the basis for deriving a complexity upper bound of {\sc SPE-Start} as we will demonstrate later.
	
	We will in particular show that there exists an SPE that is {\em eventually constant}, i.e., there exists a time step after which all the subsequent players use the same static policy.
	A mild assumption is needed for our proof: $\vargamma$ converges to a value outside of a set of {\em degenerate points} defined below. 
	
	\begin{definition}[Degenerate point]
		A discount factor $\gamma$ is called a {\em degenerate point} if
		$\Pi_{\gamma}^*$ contains more than one {non-equivalent} policy (see \Cref{def:equi-policies}), i.e., $\abs{\Pi_{\gamma}^*/\sim} > 1$,
		where $\Pi_{\gamma}^*/\sim$ is the set of equivalence classes on $\Pi_{\gamma}^*$ under the equivalence relation $\sim$ defined in \Cref{def:equi-policies}, i.e., an element $\{\pi' \in \Pi : \pi' \sim \pi \} \in \Pi_{\gamma}^*/\sim$ contains all policies equivalent to $\pi$.
	\end{definition}
	
	In what follows, let
	\[
	\Gamma \coloneqq \left\{ \gamma \in [0,1) : \abs{\Pi_{\gamma}^*/\sim} > 1 \right\}
	\]
	be the set of degenerate points in $[0,1)$.
	The assumption above ensures that the players will eventually adopt the same behavior after some time step $T$, as if the subsequent process is a constant-discounting MDP. From that point on, the dynamic policy can be represented as a static one and we can use backward induction to derive policies for players in the previous time steps.
	More formally, the main result of this section is formulated as follows.
	
	\begin{restatable}{theorem}{existSPE}
		\label{thm:exist-SPE}
		Suppose that $\gamma^{*} \coloneqq \lim_{t \to \infty} \vargamma(t)$ exists and $\gamma^{*} \in [0,1] \setminus \Gamma$. Then there exists an SPE $\ppi$ that is eventually constant, i.e., there exists a number $T \in \mathbb{N}$ such that $\pi_t = \pi_T$ for all $t \ge T$.
	\end{restatable}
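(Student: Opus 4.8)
The plan is to construct an eventually-constant SPE explicitly: fix a single static policy on a terminal segment of time steps, and fill in the finitely many earlier policies by backward induction. The construction hinges on a \emph{local constancy} property of the optimal policy class near the non-degenerate limit $\gamma^*$, which I would establish first.

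\textbf{Step 1 (local constancy of the optimal class).} For any static policy $\pi$, the map $\gamma \mapsto V_\gamma^\pi(s)$ is a rational function of $\gamma$ on $[0,1)$, since $V_\gamma^\pi$ solves the linear system with matrix $I - \gamma P_\pi$ (the transition matrix induced by $\pi$), whose determinant is nonzero for $\gamma \in [0,1)$. Hence for any two non-equivalent policies $\pi_1 \not\sim \pi_2$ (see \Cref{def:equi-policies}) and any state $s$ with $V_\cdot^{\pi_1}(s) \not\equiv V_\cdot^{\pi_2}(s)$, the difference vanishes at only finitely many $\gamma$; collecting these ``crossing points'' over the finitely many pairs and states gives a finite set $C \subseteq [0,1)$. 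On each connected component of $[0,1) \setminus C$ the ordering of all value functions is fixed, so the set of optimal policies is constant there and forms a single $\sim$-class (two non-equivalent policies optimal throughout an interval would have identically equal value functions, hence be equivalent). I would then observe $\Gamma \subseteq C$ and use non-degeneracy to preclude a change of the optimal class at $\gamma^*$ itself: by continuity the classes optimal immediately left and right of $\gamma^*$ both remain optimal at $\gamma^*$, and were they distinct they would witness $\gamma^* \in \Gamma$. This yields $\epsilon > 0$ and a policy $\pi^*$ optimal for every $\gamma \in (\gamma^* - \epsilon, \gamma^* + \epsilon) \cap [0,1)$. When $\gamma^* = 1$, the same finiteness argument makes the optimal class constant on the final component $(\max C, 1)$, i.e.\ a Blackwell-optimal policy, which I take as $\pi^*$.

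\textbf{Step 2 (pin down the tail).} Since $\vargamma(t) \to \gamma^*$, pick $T$ so that $\vargamma(t)$ lies in the neighborhood from Step 1 for all $t \ge T$ (noting $\vargamma(t) < 1$ always), so $\pi^* \in \Pi_{\vargamma(t)}^*$. Setting $\pi_t = \pi^*$ for $t \ge T$, \Cref{obs:V-constant-tail} gives $V_{\vargamma(t), t+1}^{\ppi} = V_{\vargamma(t)}^{\pi^*}$, whence $Q_{\vargamma(t), t}^{\ppi}(s,a) = Q_{\vargamma(t)}^{\pi^*}(s,a)$ and the equilibrium condition \eqref{eq:SPE-v-func} at each $t \ge T$ reduces exactly to optimality of $\pi^*$ for the constant factor $\vargamma(t)$, which holds by Step 1.

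\textbf{Step 3 (backward induction and conclusion).} For $t = T-1, \dots, 0$, the tail $(\pi_{t+1}, \pi_{t+2}, \dots)$ is already fixed, so $V_{\vargamma(t), t+1}^{\ppi}$, and hence $Q_{\vargamma(t), t}^{\ppi}(s, \cdot)$, do not depend on $\pi_t$; I set $\pi_t(s) \in \argmax_{a \in A_s} Q_{\vargamma(t), t}^{\ppi}(s, a)$, well defined as $A_s$ is finite, which enforces \eqref{eq:SPE-v-func} at time $t$. Because the value and $Q$-functions at time $i$ depend only on $\pi_i, \pi_{i+1}, \dots$, choosing $\pi_t$ does not disturb the conditions already secured for players $i > t$. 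Invoking \Cref{obs:ut-Vgt} to translate \eqref{eq:SPE-v-func} back into the SPE definition certifies that $\ppi = (\pi_0, \dots, \pi_{T-1}, \pi^*, \pi^*, \dots)$ is an SPE, eventually constant by construction. The main obstacle is Step 1: the rest is routine once the optimal class is known to be locally constant at $\gamma^*$. The delicate points are (i) using \emph{rationality} of the value functions, not mere continuity, to bound the crossings finitely, and (ii) the boundary case $\gamma^* = 1$, where individual value functions may diverge yet their ordering on the terminal interval still stabilizes, so a Blackwell-optimal policy supplies the tail.
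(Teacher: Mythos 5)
Your proof is correct, and at the skeleton level it is the same construction as the paper's: show that the optimal policy class $\Pi^*_{\vargamma(t)}$ is eventually constant, fix a single optimal policy on the tail $t \ge T$, and recover $\pi_{T-1},\dots,\pi_0$ by backward induction exactly as in Algorithm~\ref{alg:construct-SPE}; your Steps 2 and 3 match the paper's use of \Cref{obs:V-constant-tail} and its backward-induction phase essentially verbatim.

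Where you genuinely differ is in Step 1, i.e., in how local constancy of the optimal class near $\gamma^*$ is established, and your treatment is tighter than the paper's. The paper proves rationality of $\gamma \mapsto V^{\pi}_{\gamma}(s)$ (\Cref{lem:h-function-as-sum-of-rational-functions}) via a return-time decomposition and a Neumann series, concludes that the set $\Gamma$ of degenerate points is finite (\Cref{lem:G-is-finite}), and then invokes \Cref{lmm:opt-Pi-interval-G}: $\Pi^*_{\gamma}$ is constant on any interval disjoint from $\Gamma$. You instead get rationality in one line from the resolvent $(I - \gamma P_{\pi})^{-1}$ (incidentally the same computation the paper performs later in its root-separation lemma), and you work with the larger---but still finite---set $C$ of all crossing points of non-equivalent pairs, on whose complementary components every pairwise ordering, hence the optimal class, is constant by construction; the hypothesis $\gamma^* \notin \Gamma$ then enters exactly once, to merge the classes on the components immediately to the left and right of $\gamma^*$ via continuity. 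This sidesteps a loose step in the paper's proof of \Cref{lmm:opt-Pi-interval-G}: there, the intermediate value theorem produces a zero $\tilde{\gamma}$ of $h_{\pi,\pi'}$, and it is asserted that $\tilde{\gamma} \in \Gamma$, but a crossing of two particular value functions is not by itself a point where both policies are optimal (a third policy could be the unique optimum at $\tilde{\gamma}$), so that implication requires an argument---precisely the kind of argument your component-plus-merge structure supplies. Your explicit handling of $\gamma^* = 1$ via a Blackwell-optimal policy on the last component is also correct and corresponds to what the paper obtains implicitly from the finiteness of $\Gamma$.
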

	
	\subsection{Proof of \Cref{thm:exist-SPE}}
	The key to the proof is to argue that $\Pi_{\vargamma(t)}^*$ is eventually constant after a certain time step $T$. 
	With this property, we can pick an arbitrary $\tilde{\pi} \in \Pi_{\vargamma(T)}^*$ and assign it to all the players $t \ge T$. 
	This forms an SPE for the subgame starting at $T$, and according to Observation~\ref{obs:V-constant-tail}, we can use $V_{g(t),T}^\pi$ as a basis 
	and use backward induction to construct $\pi_{T-1}, \pi_{T-2}, \dots, \pi_0$ as the optimal policies of players $T-1, T-2, \dots, 0$ with respect to $V_{g(T-1),T}^\pi, V_{g(T-2),T-1}^\pi, \dots, V_{g(0),1}^\pi$, respectively. 
	The approach is summarized in Algorithm~\ref{alg:construct-SPE}.
	
	\begin{algorithm}[t]
		\SetInd{0.3em}{0.6em}
		\caption{Constructing an SPE $\ppi = (\pi_t)_{t=0}^\infty$, given that $\pi_t = \tilde{\pi}$ for all $t\ge T$\label{alg:construct-SPE}}
		\SetKwInOut{Input}{Input}
		\SetKwInOut{Output}{Output}
		
		\BlankLine
		\Input{a static policy $\tilde{\pi} \in \Pi$, and a time step $T \in \mathbb{N}$}
		\Output{an SPE $\ppi = (\pi_t)_{t=0}^\infty$}
		
		\BlankLine
		\For{$t=T-1,T-2,\dots,0$}
		{
			Compute $V_{g(t)}^{\tilde{\pi}}$ defined according to \eqref{eq:v-func-const-gamma} and \eqref{eq:Q-func-const-gamma}\;
			%\tcp*[r]{$V_{\gamma}^{\tilde{\pi}}$ with $\gamma = g(t)$}
			
			%REMOVED: %\tcp{begin constructing $\pi_0,\dots,\pi_{T-1}$ via backward induction}
			
			$V_{t,T}(s) \leftarrow V_{g(t)}^{\tilde{\pi}}(s)$ for all $s \in S$;
			\tcp*{so $V_{t,T} = V_{g(t),T}^\ppi$ (Observation~\ref{obs:V-constant-tail})}
			
			\For%(\tcp*[f]{backward induction})
			{$i = T-1,T-2,\dots,t$}%
			{
				\For{each $s \in S, a \in A_s$}
				{
					$Q_{t, i}(s, a) \leftarrow R(s,a)  + \gamma \sum_{s'\in S}P(s, a, s')\cdot V_{t, i+1}(s')$; 
					%\tcp*{i.e., \eqref{eq:Q-func-vargamma}}
					
					$V_{t, i}(s) \leftarrow Q_{t,i}(s,\pi_{t+1}(s))$\tcp*{\hspace{-2mm}so $V_{t,i} {=} V_{g(t),i}^\ppi$ in \eqref{eq:v-func-vargamma}}
				}
			}
			
			\For%(\tcp*[f]{construct $\pi_t$})
			{each $s \in S$}
			{
				$\pi_{t}(s) \leftarrow$ arbitrary action in $\argmax_{a \in A_s} Q_{t,t}(s,\pi_{t+1}(s))$;
			}
		}
		%\Return $\ppi = (\pi_t)_{t=0}^\infty$.
	\end{algorithm}

	Now to show that $\Pi_{\vargamma(t)}^*$ is eventually constant, we argue that the set $\Gamma$ of degenerate points is finite (\Cref{lem:G-is-finite}).
	Since $\gamma^* \notin \Gamma$, there must be a neighbourhood of $\gamma^*$ in $\mathbb{R}$ which does not intersect $\Gamma$. After a certain time step, the tail of $\vargamma$ will be contained inside this neighbourhood, so \Cref{lmm:opt-Pi-interval-G} then implies that $\Pi_{\vargamma(t)}^*$ becomes constant after a finite number of time steps.
	
	\begin{restatable}{lemma}{Gfinite}
		\label{lem:G-is-finite}
		$\Gamma$ is a finite set.
	\end{restatable}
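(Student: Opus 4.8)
The plan is to combine two facts: the set of static policies is finite, and for a fixed static policy the value function is a rational function of the discount factor. Together with the uniqueness of the optimal value function, these will confine all degenerate points to the zero sets of finitely many nonzero polynomials.

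First I would note that a static policy $\pi$ induces a fixed (stochastic) transition matrix $P_\pi$ with entries $P_\pi(s,s') = P(s,\pi(s),s')$ and a reward vector $r_\pi$ with $r_\pi(s) = R(s,\pi(s))$, so that $V_\gamma^\pi$ is the solution of the linear system $(I - \gamma P_\pi)\,V_\gamma^\pi = r_\pi$. Since $P_\pi$ is stochastic, $I - \gamma P_\pi$ is invertible for every $\gamma \in [0,1)$, and by Cramer's rule each coordinate $V_\gamma^\pi(s)$ is a ratio of determinants, i.e.\ a rational function of $\gamma$ whose denominator $\det(I - \gamma P_\pi)$ does not vanish on $[0,1)$. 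Consequently, for any two policies $\pi_1,\pi_2$ and any state $s$, the difference $V_\gamma^{\pi_1}(s) - V_\gamma^{\pi_2}(s)$ is rational in $\gamma$ with non-vanishing denominator on $[0,1)$; it is therefore either identically zero on $[0,1)$ or has only finitely many zeros there, the numerator being a nonzero polynomial in the latter case.

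Next I would link degeneracy to such zeros. Condition \eqref{eq:V-pi-gamma} says exactly that $V_\gamma^\pi$ satisfies the Bellman optimality equation, so every $\pi \in \Pi_\gamma^*$ shares the same (unique) optimal value function at that particular $\gamma$. Hence if $\gamma$ is degenerate, witnessed by non-equivalent policies $\pi_1,\pi_2 \in \Pi_\gamma^*$, then $V_\gamma^{\pi_1}(s) = V_\gamma^{\pi_2}(s)$ for all $s$, while non-equivalence (\Cref{def:equi-policies}) supplies a state $s^*$ at which $V_{(\cdot)}^{\pi_1}(s^*) - V_{(\cdot)}^{\pi_2}(s^*)$ is not the zero rational function. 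Thus $\gamma$ lies in the finite zero set of that function.

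Finally I would union over the finitely many pairs of non-equivalent policies (there are at most $\binom{\abs{\Pi}}{2}$ of them). For each such pair, fixing a witnessing state $s^*$, the set $\{\gamma \in [0,1) : \pi_1,\pi_2 \in \Pi_\gamma^*\}$ is contained in the finite zero set of a nonzero polynomial, and $\Gamma$ is contained in the union of these sets; a finite union of finite sets is finite. I expect the main obstacle to be the bridging step rather than the algebra: one must argue that degeneracy forces the two witnessing value functions to coincide \emph{at} $\gamma$ (via uniqueness of the optimal value), since it is this collapse at $\gamma$, set against their disagreement at some other discount factor, that prevents a whole interval of degeneracy and pins $\gamma$ down to an isolated root. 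Verifying the rational structure and the non-vanishing of $\det(I - \gamma P_\pi)$ on $[0,1)$ is otherwise routine.
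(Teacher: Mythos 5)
Your proof is correct, and its outer skeleton is exactly the paper's: degenerate points are confined to the zero sets of the value-difference functions $h^s_{\pi_1,\pi_2}$ taken over the finitely many non-equivalent pairs of (deterministic) static policies, each such function being a rational function of $\gamma$ that is not identically zero, hence with finitely many roots in $[0,1)$. Where you genuinely diverge is in how the rational structure is established. The paper's \Cref{lem:h-function-as-sum-of-rational-functions} obtains it probabilistically: it decomposes $V_\gamma^\pi(s^\star)=\sum_{s} f_\pi^s(\gamma)$ via first-passage times $K_s^i$, resolves the resulting geometric recursion to get $f_\pi^s(\gamma)=\E\bigl[\gamma^{K_s^1}\bigr]\cdot R(s,\pi(s))\cdot\bigl(1-\E\bigl[\gamma^{K_s^2}\bigr]\bigr)^{-1}$, and then applies a Neumann-series and Cayley--Hamilton argument to $(I-\gamma M)^{-1}$ for the submatrix $M$. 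You instead read rationality off the policy-evaluation system $(I-\gamma P_\pi)V_\gamma^\pi = r_\pi$ directly via Cramer's rule, using that $\det(I-\gamma P_\pi)\neq 0$ on $[0,1)$ because $P_\pi$ is stochastic. For the purposes of this lemma alone your route is shorter and more standard; the paper's heavier decomposition is not wasted effort, however, since the explicit form of $f_\pi^s$ is reused later in the continuity bound of \Cref{lmm:eSPE-bound} and the root-separation bound of \Cref{lmm:root-distance} (where a Cramer's-rule analysis does reappear), so the two presentations trade local economy against downstream reuse. A further small merit of your write-up: you make explicit the bridging step the paper leaves implicit, namely that two non-equivalent policies both lying in $\Pi_\gamma^*$ must have identical value functions at that particular $\gamma$, because each satisfies the Bellman optimality condition \eqref{eq:V-pi-gamma} whose solution is unique; it is precisely this collapse at $\gamma$, against disagreement elsewhere, that pins each degenerate point to an isolated root.
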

	
	\begin{proof}
		Define 
		\begin{align} 
			\label{eq:h-s}
			h^s_{\pi_1, \pi_2}(\gamma) \coloneqq V_{\gamma}^{\pi_1}(s) - V_{\gamma}^{\pi_2}(s)\thinspace.
		\end{align}
		By definition, for any $\gamma \in \Gamma$, there exist $\pi_1, \pi_2 \in \Pi_\gamma^*$ such that $\pi_1 \not\sim \pi_2$, which means that $h^s_{\pi_1, \pi_2}(\gamma) = 0$ for all $s \in S$.
		Hence, $|\Gamma|$ is bounded from above by the number of $\gamma$s such that $h_{\pi_1, \pi_2}^s (\gamma) = 0$ for some $s \in S$ and some $\pi_1, \pi_2 \in \Pi$ that are not equivalent.
		By \Cref{lem:h-function-as-sum-of-rational-functions},
		$h_{\pi_1, \pi_2}^s(\gamma) = \Psi(\gamma) / \Phi(\gamma)$, where both $\Psi(\gamma)$ and $\Phi(\gamma)$ are polynomial functions of $\gamma$ with finite degrees.
		Hence, the number of zeros of $h_{\pi_1, \pi_2}^s(\gamma)$ is finite.
	\end{proof}
	
	\begin{restatable}{lemma}{hSumOfRationals}
		\label{lem:h-function-as-sum-of-rational-functions}
		Let $\pi_1, \pi_2 \in \Pi$ be two policies and $\gamma \in [0,1)$. 
		The function $h^s_{\pi_1, \pi_2}(\gamma)$ can be written as
		\begin{equation}
			\label{eq:poly-form}
			h^s_{\pi_1, \pi_2}(\gamma) = \Psi(\gamma)/\Phi(\gamma)\thinspace,
		\end{equation}
		where $\Psi$ and $\Phi$ are polynomials of $\gamma$ with finite degrees.
	\end{restatable}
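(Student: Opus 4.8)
The plan is to show that for a \emph{single} fixed static policy the value function is a rational function of $\gamma$, and then subtract. Fix a static policy $\pi \in \Pi$. By \eqref{eq:v-func-const-gamma}--\eqref{eq:Q-func-const-gamma}, the values $V_\gamma^\pi(s)$, $s\in S$, satisfy the Bellman linear system
\[
V_\gamma^\pi(s) = R(s,\pi(s)) + \gamma \sum_{s'\in S} P(s,\pi(s),s')\, V_\gamma^\pi(s'),\qquad s\in S.
\]
Collecting the $n=\abs{S}$ values into a vector $\mathbf v(\gamma)=(V_\gamma^\pi(s))_{s\in S}$, writing $\mathbf r=(R(s,\pi(s)))_{s\in S}$, and letting $P_\pi$ be the $n\times n$ row-stochastic matrix with entries $P(s,\pi(s),s')$, this reads $(I-\gamma P_\pi)\,\mathbf v(\gamma)=\mathbf r$.

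Next I would solve this system explicitly by Cramer's rule. First observe that the coefficient matrix is invertible on $[0,1)$: since $P_\pi$ is row-stochastic, its spectral radius equals $1$, so for $\gamma\in[0,1)$ every eigenvalue of $\gamma P_\pi$ lies strictly inside the unit disk and $1$ is not among them, whence $\det(I-\gamma P_\pi)\neq 0$. Cramer's rule then gives
\[
V_\gamma^\pi(s) = \frac{\det\bigl(M_s(\gamma)\bigr)}{\det(I-\gamma P_\pi)},
\]
where $M_s(\gamma)$ is obtained from $I-\gamma P_\pi$ by replacing the column indexed by $s$ with $\mathbf r$. Each entry of $I-\gamma P_\pi$ is a polynomial in $\gamma$ of degree at most $1$, so both determinants—being signed sums of products of $n$ such entries—are polynomials in $\gamma$ of degree at most $n$. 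Hence $V_\gamma^\pi(s)=\Psi_\pi^s(\gamma)/\Phi_\pi(\gamma)$ with a common denominator $\Phi_\pi=\det(I-\gamma P_\pi)$ that does not depend on $s$.

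Finally I would apply the above to both $\pi_1$ and $\pi_2$ and combine the two rational functions over a common denominator:
\[
h^s_{\pi_1,\pi_2}(\gamma) = \frac{\Psi_{\pi_1}^s(\gamma)\,\Phi_{\pi_2}(\gamma) - \Psi_{\pi_2}^s(\gamma)\,\Phi_{\pi_1}(\gamma)}{\Phi_{\pi_1}(\gamma)\,\Phi_{\pi_2}(\gamma)},
\]
which is of the claimed form $\Psi(\gamma)/\Phi(\gamma)$ with $\Psi,\Phi$ polynomials of finite degree. This argument is essentially routine linear algebra, so I do not expect a genuine obstacle; the only points needing care are verifying that the denominator $\det(I-\gamma P_\pi)$ is nonzero on $[0,1)$—so that the rational representation is valid precisely on the domain of interest—and the degree bound, both of which follow from the stochasticity of $P_\pi$ and the fact that the matrix entries are affine in $\gamma$.
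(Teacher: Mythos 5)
Your proof is correct, but it takes a genuinely different route from the paper's. You work directly with the Bellman linear system $(I-\gamma P_\pi)\mathbf{v}(\gamma)=\mathbf{r}$ over the full state space, justify invertibility on $[0,1)$ via the spectral radius of the row-stochastic matrix $P_\pi$, and apply Cramer's rule, so that $V_\gamma^\pi(s)$ is a ratio of two determinants that are polynomials of degree at most $n=\abs{S}$, with a common denominator $\det(I-\gamma P_\pi)$ for all states; subtracting the two policies' rational functions finishes the job. The paper instead decomposes $V_\gamma^\pi(s^\star)=\sum_{s\in S} f_\pi^s(\gamma)$ according to the random return times $K_s^i$ to each state $s$, derives the renewal-type formula $f_\pi^s(\gamma)=\E\bigl[\gamma^{K_s^1}\bigr]\cdot R(s,\pi(s))\cdot\bigl(1-\E\bigl[\gamma^{K_s^2}\bigr]\bigr)^{-1}$, and then shows each expectation is rational via a Neumann series for $(I-M\gamma)^{-1}$ (where $M$ is the transition matrix restricted to $S\setminus\{s\}$) together with the Cayley--Hamilton theorem. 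For the present lemma in isolation your argument is shorter, more elementary, and gives a cleaner and tighter degree bound; it also makes the nonvanishing of the denominator on $[0,1)$ explicit, which the paper handles only implicitly through convergence of the Neumann series. What the paper's heavier machinery buys is reuse: the decomposition \eqref{eq:V-f} and formula \eqref{eq:f-pi-s-gamma-2} are the workhorses of the later continuity bound (\Cref{lmm:eSPE-bound}) and the root-separation bound (\Cref{lmm:root-distance}), which need term-by-term control of $\E\bigl[\gamma^{K_s^1}\bigr]$ and $\bigl(1-\E\bigl[\gamma^{K_s^2}\bigr]\bigr)^{-1}$ rather than just rationality.
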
 
	\noindent This and subsequent omitted proofs can be found in the appendix.

	\begin{restatable}{lemma}{optPiIntervalG}
		\label{lmm:opt-Pi-interval-G}
		For any interval $I \subseteq [0,1)$ such that $I \cap \Gamma = \emptyset$, we have $\Pi_{\gamma}^* = \Pi_{\gamma'}^*$ for any $\gamma, \gamma' \in I$.
	\end{restatable}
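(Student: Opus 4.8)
The plan is to establish that the set-valued map $\gamma \mapsto \Pi_\gamma^*$ is \emph{locally constant} on $[0,1)\setminus\Gamma$ and then to invoke the connectedness of the interval $I$. Concretely, I would fix an arbitrary $\gamma_0 \in I$ (so $\gamma_0 \notin \Gamma$) and prove that there is a neighbourhood $N$ of $\gamma_0$ on which $\Pi_\gamma^* = \Pi_{\gamma_0}^*$. The only analytic input needed is that each difference $h^s_{\pi_1,\pi_2}(\gamma)$ from \eqref{eq:h-s} is continuous on $[0,1)$, which is immediate from its rationality (\Cref{lem:h-function-as-sum-of-rational-functions}, whose denominator has no zero in $[0,1)$), together with the standing facts that $\Pi$ is finite and $\Pi_\gamma^* \neq \emptyset$ for every $\gamma$.

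For the local-constancy step I would partition the finite policy set as $\Pi = \mathcal{O} \sqcup \mathcal{S}$, where $\mathcal{O} = \Pi_{\gamma_0}^*$ and $\mathcal{S}$ collects the policies suboptimal at $\gamma_0$. Since $\gamma_0 \notin \Gamma$, all of $\mathcal{O}$ lies in a single $\sim$-class, so by \Cref{def:equi-policies} every $\pi \in \mathcal{O}$ shares the \emph{same} value function $V_\gamma^\pi$ for \emph{all} $\gamma$, not just at $\gamma_0$. To show $\Pi_\gamma^* \subseteq \mathcal{O}$ near $\gamma_0$, fix a representative $\hat\pi \in \mathcal{O}$; for each $\pi \in \mathcal{S}$ there is a state $s_\pi$ with $h^{s_\pi}_{\hat\pi,\pi}(\gamma_0) > 0$, and by continuity this stays positive on a neighbourhood $N_\pi$, whence $V_\gamma^\pi(s_\pi) < V_\gamma^{\hat\pi}(s_\pi)$ cannot exceed the optimal value at $s_\pi$, so $\pi$ is suboptimal there. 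Intersecting the finitely many $N_\pi$ yields a neighbourhood $N$ on which every $\pi \in \mathcal{S}$ is suboptimal, i.e. $\Pi_\gamma^* \subseteq \mathcal{O}$. For the reverse inclusion $\mathcal{O} \subseteq \Pi_\gamma^*$ I would use $\Pi_\gamma^* \neq \emptyset$: some optimal $\rho \in \Pi_\gamma^*$ must lie in $\mathcal{O}$, and as every member of $\mathcal{O}$ has $\rho$'s value function for all $\gamma$, each such policy attains the optimal value at $\gamma$ and is therefore optimal. Hence $\Pi_\gamma^* = \mathcal{O} = \Pi_{\gamma_0}^*$ on $N$.

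Local constancy on the interval then finishes the argument: the preimages $\{\gamma \in I : \Pi_\gamma^* = \mathcal{P}\}$, indexed by subsets $\mathcal{P} \subseteq \Pi$, are open and partition $I$, so connectedness of the interval forces exactly one of them to be nonempty, giving $\Pi_\gamma^* = \Pi_{\gamma'}^*$ for all $\gamma,\gamma' \in I$. I expect the main obstacle to be the reverse inclusion $\mathcal{O} \subseteq \Pi_\gamma^*$: a continuity argument by itself only guarantees that policies suboptimal at $\gamma_0$ remain suboptimal, and one must separately rule out that a policy optimal at $\gamma_0$ degrades to suboptimal arbitrarily close by. This is precisely where the hypothesis $I \cap \Gamma = \emptyset$ enters---non-degeneracy collapses $\mathcal{O}$ to a single equivalence class, so its members have globally identical value functions and cannot separate as $\gamma$ varies.
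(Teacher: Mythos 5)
Your proof is correct, and it takes a genuinely different route from the paper's. The paper argues by contradiction via the intermediate value theorem: if $\Pi_{\gamma}^* \neq \Pi_{\gamma'}^*$ for $\gamma<\gamma'$ in $I$, the two optimal sets must be disjoint (each is a single $\sim$-class since neither point is degenerate), so representatives $\pi\in\Pi_{\gamma}^*$ and $\pi'\in\Pi_{\gamma'}^*$ give $h_{\pi,\pi'}(\gamma)>0>h_{\pi,\pi'}(\gamma')$, and continuity produces a $\tilde{\gamma}\in(\gamma,\gamma')$ with $h_{\pi,\pi'}(\tilde{\gamma})=0$, which the paper asserts forces $\tilde{\gamma}\in\Gamma$, a contradiction. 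You instead prove that $\gamma\mapsto\Pi_{\gamma}^*$ is locally constant on $I$ and finish by connectedness. Both arguments consume the same inputs---continuity of $\gamma \mapsto V_{\gamma}^{\pi}(s)$ via \Cref{lem:h-function-as-sum-of-rational-functions}, finiteness of $\Pi$, nonemptiness of $\Pi_{\gamma}^*$, and the collapse of the optimal set to one $\sim$-class off $\Gamma$---but your decomposition is tighter at exactly the step the paper compresses: the implication ``$h_{\pi,\pi'}(\tilde{\gamma})=0 \Rightarrow \abs{\Pi_{\tilde{\gamma}}^*/\sim}>1$'' is the converse of the direction used in \Cref{lem:G-is-finite} and is not immediate, since equality of the two policies' values at $\tilde{\gamma}$ does not by itself show that either policy is \emph{optimal} at $\tilde{\gamma}$, which is what membership in $\Gamma$ requires; repairing it needs an additional argument (e.g., a supremum/closedness argument over the set of discount factors where $\pi$ remains optimal, plus pigeonhole over the finitely many policies). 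Your proof never needs such a claim: policies suboptimal at $\gamma_0$ stay suboptimal nearby by persistence of strict inequalities, and the optimal class stays optimal because its members share one value function for \emph{every} $\gamma$ while $\Pi_{\gamma}^*\neq\emptyset$---the reverse inclusion you rightly flagged as the crux. The only cost is mild topological bookkeeping (relative neighbourhoods at endpoints of $I$, and the partition of $I$ into the finitely many sets $\{\gamma\in I:\Pi_{\gamma}^*=\mathcal{P}\}$ for $\mathcal{P}\subseteq\Pi$).
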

	
	\begin{proof}
		Without loss of generality, suppose that $\gamma < \gamma'$ and, for the sake of contradiction, $\Pi_{\gamma}^* \neq \Pi_{\gamma'}^*$. 
		The fact that $\gamma, \gamma' \notin \Gamma$ directly implies that $\Pi_{\gamma}^* \cap \Pi_{\gamma'}^* = \emptyset$ as both sets contain only equivalent policies. 
		Pick arbitrary $\pi \in \Pi_{\gamma}^*$ and $\pi' \in \Pi_{\gamma'}^*$. 
		We have $h_{\pi, \pi'}(\gamma) > 0$ and $h_{\pi, \pi'}(\gamma') < 0$.
		As $h_{\pi, \pi'}$ is a continuous function, there exists a $\tilde{\gamma} \in (\gamma, \gamma') \subseteq I$ such that $h_{\pi, \pi'}(\gamma') = 0$, 
		which implies $\abs{\Pi_{\tilde{\gamma}}^*/\sim} > 1$ and hence $\tilde{\gamma} \in \Gamma$. This contradicts the assumption that $I \cap \Gamma = \emptyset$.
	\end{proof}

	\section{Complexity of {\sc SPE-Start}}
	\label{section:computation}
	Consider using Algorithm~\ref{alg:construct-SPE} to construct an SPE. It requires specifying $T$ in the input which we have not yet described how to obtain.
	Indeed, this replies on the specific format of $g$.
	In addition to the computational cost of obtaining $T$, Algorithm~\ref{alg:construct-SPE} includes $O(T^2 \cdot |S|)$ iterations, so the overall time complexity also depends on the magnitude of $T$.
	The latter cost prevents the algorithm from being efficient if $T$ is exponential in the size of the problem, so the question is whether there are better algorithms that solve {\sc SPE-Start} without going through all the iterations.
	It turns out that this is in general not possible: as we will show next, even for discount functions that admit efficient computation of $T$, computing an SPE can be EXPTIME-hard.
	
	\subsection{EXPTIME-Hardness}
	
	We show that {\sc SPE-Start} is EXPTIME-hard even when $\vargamma: \mathbb{N} \to [0,1)$ is a {\em down-step} function defined as follows:
	\begin{equation}
		\label{eq:down-step}
		\vargamma(t) \coloneqq
		\begin{cases}
			\gamma & \text{ if } t \le T \\
			0 & \text{ otherwise}\thinspace,
		\end{cases}
	\end{equation}
	where $\gamma \in (0,1)$ and $T \in \mathbb{N}$ is encoded in binary. Arguably this is one of the simplest forms of time-varying discounting, and $\vargamma$ can be encoded as $(\gamma,T)$.
	
	Note that \eqref{eq:down-step} does not define a finite horizon MDP. 
	Instead, it defines a game where eventually all the players from time step $T$ onward exhibit a discount factor of 0.
	We will show that {\sc SPE-Start} is EXPTIME-hard even when the discount function is restricted to this simple form. 
	The proof uses a reduction from the following problem, termed {\sc ValIt} (value iteration), which is known to be EXPTIME-complete \citep{shirmohammadi2019complexity}.
	
	\begin{definition}[{\sc ValIt}]
		An instance of {\sc ValIt}, given by $(\calM,a^\dag,T)$, consists of an MDP $\calM = (S, A, R, P, \sstart, \gamma)$ with constant discount factor $\gamma > 0$, an action $a^\dag \in A$, and finite time horizon $T \in \mathbb{N}$ encoded in binary.
		It is a yes-instance if there exists a dynamic policy $\ppi$ such that $\pi_0(\sstart) = a^\dag$ and $\pi_t(s) \in \argmax_{a \in A_{s}} Q_t(s, a)$ for all $t = 0,\dots,T-1$ and $s \in S$,
		where 
		\begin{align}
			Q_t(s,a) &\coloneqq  R(s,a) + \gamma \sum_{s'\in S}P(s,a, s')\cdot V_{t+1}(s')\thinspace,  \label{eq:valit-Q} \\
			V_t(s) &\coloneqq \max_{a\in A_s} Q_t(s,a)\thinspace,
			\label{eq:valit}
		\end{align}
		and $Q_{T}(s,a) \equiv 0$.
		Otherwise, it is a no-instance.
	\end{definition}
	
	The $V_t$ functions in the above definition are akin to the value functions defined in \eqref{eq:V-pi-gamma} but with a time-dependency.
	Using {\sc ValIt}, we prove the following result.
	
	\begin{restatable}{theorem}{EXPTIMEhardnessDownStep}
		\label{thm:SPE-start-is-EXPTIME-hard-for-down-step-functions}
		{\sc SPE-Start} is EXPTIME-hard even when the discount function is a down-step function.
	\end{restatable}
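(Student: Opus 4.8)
The plan is to reduce \valit{} to {\sc SPE-Start}. Given a \valit{} instance $(\calM, a^\dag, T)$ with constant discount $\gamma$, I would build an {\sc SPE-Start} instance whose discount is the down-step function with $\vargamma(t) = \gamma$ for $t \le T-1$ and $\vargamma(t) = 0$ for $t \ge T$. The guiding observation is that every player $t \ge T$ has discount $0$ and is therefore purely myopic: in an SPE such a player must choose, in each state, an action maximizing the immediate reward $R(s,a)$. Hence the tail from time $T$ onward collapses to a single forced static policy, and by Observation~\ref{obs:V-constant-tail} the players $0,\dots,T-1$ (all discounting with $\gamma$) effectively perform a length-$T$ backward induction whose boundary is the continuation value of this myopic tail. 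If that boundary were the zero function, the SPE conditions for players $0,\dots,T-1$ would be literally the \valit{} conditions~\eqref{eq:valit-Q}--\eqref{eq:valit}, and the requirement $\pi_0(\sstart)=a^\dag$ would match on both sides.

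The obstacle is precisely that the infinite myopic tail still accrues reward, so its continuation value is generally nonzero and state-dependent, unlike \valit's boundary $V_T \equiv 0$. To neutralize it I would add a gadget: a new absorbing sink $z$ with a single self-loop of reward $r_z$, and, in every original state $s$, a new ``divert'' action $d_s$ with reward $r_d \coloneqq R_{\max}+1$ (where $R_{\max}=\max_{s,a}R(s,a)$) that moves to $z$. Because $r_d$ strictly exceeds every original immediate reward, each myopic player is forced to pick $d_s$ in every original state; consequently the myopic continuation value, as seen through discount $\gamma$, equals the same constant $c \coloneqq r_d + \gamma r_z/(1-\gamma)$ in all original states. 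A constant boundary is the key: shifting the terminal values by a state-independent constant shifts every finite-horizon value function uniformly (by $\gamma^{T-t}c$ at epoch $t$) and therefore leaves all the $\argmax$ sets unchanged. Thus, as far as the $\argmax$ conditions are concerned, the $\gamma$-players face exactly the \valit{} recursion with boundary $0$.

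It remains to ensure the divert action does not contaminate the $\gamma$-players' choices. Its value to any $\gamma$-player is exactly $c$ at every epoch, so I would choose $r_z$ negative enough that $c$ falls strictly below every \valit{} value $V_t(s)$: using the standard bound $|V_t(s)| \le R_{\mathrm{abs}}/(1-\gamma)$ with $R_{\mathrm{abs}} \coloneqq \max_{s,a}|R(s,a)|$, it suffices to take $c < -R_{\mathrm{abs}}/(1-\gamma)^2 - 1$, which fixes a polynomially sized $r_z$. A downward induction on $t$ then shows that at every original state and every epoch $t \le T-1$ the divert action is strictly dominated, so the $\gamma$-players' value functions coincide with the shifted \valit{} values and their SPE-optimal actions are exactly the \valit{}-optimal original actions. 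From here the equivalence follows in both directions: any SPE with $\pi_0(\sstart)=a^\dag$ restricts (on the original states, for $t \le T-1$) to a valid \valit{} policy starting with $a^\dag$; conversely any \valit{} policy extends to an SPE by appending the forced diverting behavior for $t \ge T$ and the trivial self-loop at $z$. Since the whole construction is polynomial in the input, EXPTIME-hardness of {\sc SPE-Start} for down-step functions follows from that of \valit{}.

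I expect the main difficulty to be this tension in designing $d_s$: it must be attractive enough to force the myopic players into a constant-value sink, yet unattractive enough that the far-sighted $\gamma$-players never take it. Reconciling these opposite requirements---through a large immediate reward $r_d$ paired with a large negative sink reward $r_z$---is the crux, together with verifying (via the uniform-shift argument and the dominance induction) that the resulting constant continuation value leaves the \valit{} $\argmax$ structure exactly intact.
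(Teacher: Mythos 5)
Your proposal is correct and is essentially the paper's own reduction: both reduce \valit{} (with the same down-step discount) via a ``divert'' gadget whose immediate reward makes it strictly dominant for the myopic players (discount $0$) but whose ensuing penalty makes it strictly dominated for the $\gamma$-players, so that the post-cutoff continuation is a state-independent constant that leaves the \valit{} $\argmax$ structure intact. The only difference is cosmetic---the paper realizes the gadget as a high-reward action $a^*$ followed by one large-penalty action $a^{**}$ into a zero-reward terminal state, whereas you use an absorbing sink with a negative self-loop reward; both implement the same attract-the-myopic/repel-the-farsighted tension and the same constant-shift argument.
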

	
	\begin{proof}[Proof sketch]
		We reduce {\sc ValIt} to {\sc SPE-Start}.
		The main idea of the reduction is to construct an {\sc SPE-Start} instance where all players $t > T$ will stick to the same static policies regardless of policies chosen by the preceding players. 
		\Cref{figure:reduction-to-one-discount-factor} illustrates the MDP in the {\sc SPE-Start} instance. A chain consisting of two states $s^*$ and $s^{**}$ is appended to every state $s$ in the {\sc ValIt} instance. The high reward at $a^*$ ensures that $a^*$ is the {\em dominant} action for player $T$, who has $\vargamma(T) = 0$ and only cares about the immediate reward; whereas the high penalty at $a^{**}$ ensures that $a^*$ is a {\em dominated} action for all players $t = 0,\dots,T$, who have $\vargamma(t) = \gamma$. 
		Hence, for every player $t \leq T$ in the {\sc SPE-Start} instance, the process is equivalent to an MDP with time horizon $T+1$.
		The procedure to derive $\pi_T, \pi_{T-1}, \dots, \pi_0$ in an SPE using backward induction is the same as computing the value functions of the {\sc ValIt} instance. Every SPE is then associated to an optimal policy of {\sc ValIt}.
	\end{proof}
	
	We remark that the binary encoding of $T$ plays a crucial role in the EXPTIME-hardness of {\sc SPE-Start}. 
	Indeed, if $T$ is encoded in unary or is a constant, the hardness will disappear.
	In general, an efficient algorithm for computing an SPE is possible but requires the assumption of $g$ converging fast enough to an interval between two consecutive numbers in $\Gamma$.
	To ease part of the intricacies introduced by the requirement, a practical approach which we will present next is by considering the approximate notion of the SPE, the $\epsilon$-SPE.

	\paragraph{Remark} 
	The down-step function we defined in \eqref{eq:down-step} may appear to only be representative of decreasing functions. However, the EXPTIME-hardness remains if we consider simple increasing functions $\vargamma$ such that $\vargamma(t) = \gamma_1$ if $t \le T$, $\vargamma(t) = \gamma_2$ if $t > T$, and $\gamma_1 < \gamma_2$; our proof can be easily extended to such functions.
	
	\begin{figure}[t]
		\centering
		\includegraphics[width=0.7\linewidth]{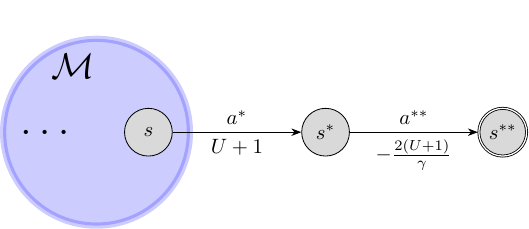}
		\caption{Reduction from $\valit$ to {\sc SPE-Start}. The blue disk represents the original MDP $\calM$ in the $\valit$ instance, and the outer nodes indicate how to extend $\calM$ to an MDP for the {\sc SPE-Start} instance, where the discount rate is fixed to the original discount rate $\gamma$ in the first $T$ time steps and set to $0$ afterwards. Labels above edges are action names and labels below are rewards. 
		}
		\label{figure:reduction-to-one-discount-factor}
	\end{figure}
	
	\section{Approximate SPEs}
	
	The $\epsilon$-SPE, defined below, assumes that the players are reluctant to deviate as long as the potential improvement is smaller than some $\epsilon > 0$. 
	
	\begin{definition}[$\epsilon$-SPE]
		A dynamic policy $\ppi = (\pi_t)_{t=0}^\infty$ forms an $\epsilon$-SPE if for all $t \in \mathbb{N}$ it holds that for all $s \in S$: $u_t(\ppi|s) \ge u_t(\ppi'|s) - \epsilon$ for all $\ppi'= (\pi'_t)_{t=0}^\infty$ such that $\pi'_i = \pi_i$ for all $i \in \mathbb{N} \setminus \{t\}$.
	\end{definition}
	
	The notion allows us to relax the assumption that $\vargamma$ converges to a point outside of $\Gamma$ and allows us to derive an upper bound of the computational complexity, too. 
	Indeed, the existence of an $\epsilon$-SPE, in particular an {\em eventually constant} one, does not require this assumption.
	Instead, we use the following continuity argument.
	
	\begin{restatable}{lemma}{epsSPEbound}
		\label{lmm:eSPE-bound}
		Suppose that all rewards are bounded by $M$. Then for any discount factors $\gamma, \tgamma \in [0,1)$ and static policy $\pi \in \Pi$, we have the following bound for all $s \in S$:
		\[
		\abs{V_{\gamma}^{\pi}(s) - V_{\tgamma}^{\pi}(s)}
		\le \frac{2 M \cdot |S| \cdot \abs{\gamma - \tgamma}}{\left(1-\max\{\gamma, \tgamma\}\right)^3  \cdot \left(1-\min\{\gamma, \tgamma\} \right)}\thinspace.
		\]
	\end{restatable}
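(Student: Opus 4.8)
The plan is to fix the static policy $\pi$ and exploit the fact that, for a constant discount factor, $V_\gamma^\pi$ is the unique fixed point of the Bellman recursion \eqref{eq:v-func-const-gamma}--\eqref{eq:Q-func-const-gamma}. Writing $P_\pi$ for the $|S|\times|S|$ row-stochastic transition matrix induced by $\pi$ (with $(P_\pi)_{s,s'} = P(s,\pi(s),s')$) and $r_\pi$ for the reward vector $r_\pi(s) = R(s,\pi(s))$, this fixed point is $V_\gamma^\pi = (I - \gamma P_\pi)^{-1} r_\pi$, which is well defined since $\gamma\in[0,1)$ and $\norm{P_\pi}_\infty = 1$. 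Two elementary facts will drive the estimate: first, the Neumann series gives $\norm{(I-\gamma P_\pi)^{-1}}_\infty \le \sum_{k\ge0}\gamma^k = 1/(1-\gamma)$; second, since every reward is bounded by $M$, the discounted sum \eqref{eq:cul-reward} yields $\norm{V_{\tgamma}^\pi}_\infty \le M/(1-\tgamma)$.

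Next I would subtract the two Bellman identities. From $V_\gamma^\pi = r_\pi + \gamma P_\pi V_\gamma^\pi$ and $V_{\tgamma}^\pi = r_\pi + \tgamma P_\pi V_{\tgamma}^\pi$, a short rearrangement (adding and subtracting $\gamma P_\pi V_{\tgamma}^\pi$) gives $(I-\gamma P_\pi)\,(V_\gamma^\pi - V_{\tgamma}^\pi) = (\gamma-\tgamma)\,P_\pi V_{\tgamma}^\pi$, and hence $V_\gamma^\pi - V_{\tgamma}^\pi = (\gamma-\tgamma)\,(I-\gamma P_\pi)^{-1} P_\pi V_{\tgamma}^\pi$. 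Taking the $\infty$-norm and using sub-multiplicativity of the operator norm together with the two facts above, I obtain $\norm{V_\gamma^\pi - V_{\tgamma}^\pi}_\infty \le \abs{\gamma-\tgamma}\cdot\frac{1}{1-\gamma}\cdot 1\cdot\frac{M}{1-\tgamma} = \frac{M\abs{\gamma-\tgamma}}{(1-\gamma)(1-\tgamma)}$. Since $(1-\gamma)(1-\tgamma) = (1-\max\{\gamma,\tgamma\})(1-\min\{\gamma,\tgamma\})$, this is exactly $\frac{M\abs{\gamma-\tgamma}}{(1-\max\{\gamma,\tgamma\})(1-\min\{\gamma,\tgamma\})}$, a valid bound on $\abs{V_\gamma^\pi(s)-V_{\tgamma}^\pi(s)}$ for every $s \in S$.

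Finally, I would conclude by comparing with the claimed inequality. The ratio of the claimed bound to the one just derived equals $\frac{2|S|}{(1-\max\{\gamma,\tgamma\})^2}$, which is at least $2$ because $|S|\ge1$ and $1-\max\{\gamma,\tgamma\}\le1$; hence the stated (looser) inequality follows \emph{a fortiori}. An alternative, equally valid route avoids matrices entirely: expand $V_\gamma^\pi(s) = \sum_{t\ge0}\gamma^t\rho_t(s)$ with $\rho_t(s) = \E[R(s_t,a_t)\mid s_0=s,\pi]$ and $\abs{\rho_t(s)}\le M$, then bound the difference termwise via $\abs{\gamma^t-\tgamma^t}\le \abs{\gamma-\tgamma}\,t\,\max\{\gamma,\tgamma\}^{t-1}$ and sum $\sum_{t\ge1} t x^{t-1} = (1-x)^{-2}$. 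I do not expect a genuine obstacle here: the only points needing care are justifying the interchange of the subtraction with the infinite sum (absolute convergence, immediate from $\gamma,\tgamma<1$) and verifying $\norm{P_\pi}_\infty=1$ and the reward-sum bound. The real ``difficulty'' is only cosmetic, namely that the stated denominator $(1-\max\{\gamma,\tgamma\})^3(1-\min\{\gamma,\tgamma\})$ and the factor $2|S|$ are far more generous than necessary, so any of these elementary estimates already proves the lemma.
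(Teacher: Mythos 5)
Your proof is correct, but it follows a genuinely different route from the paper's. The paper does not use the Bellman fixed-point/resolvent identity at all: it reuses the return-time decomposition $V_{\gamma}^{\pi}(s) = \sum_{s'\in S} f_{\pi}^{s'}(\gamma)$ with $f_\pi^s(\gamma) = \E\bigl[\gamma^{K_s^1}\bigr] R(s,\pi(s))\,/\,(1-\E[\gamma^{K_s^2}])$, established earlier for \Cref{lem:h-function-as-sum-of-rational-functions}, and bounds each $\abs{f^s_\pi(\gamma)-f^s_\pi(\tgamma)}$ separately by $2M\abs{\gamma-\tgamma}/((1-\gamma)^3(1-\tgamma))$ before summing over the $\abs{S}$ states --- which is precisely where the factor $\abs{S}$ and the cubed denominator in the statement come from. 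Your argument instead subtracts the two Bellman identities to get $V_\gamma^\pi - V_{\tgamma}^\pi = (\gamma-\tgamma)(I-\gamma P_\pi)^{-1}P_\pi V_{\tgamma}^\pi$ and applies the Neumann-series norm bound $\norm{(I-\gamma P_\pi)^{-1}}_\infty \le 1/(1-\gamma)$ together with $\norm{V_{\tgamma}^\pi}_\infty \le M/(1-\tgamma)$; each step checks out, and it yields the strictly tighter bound $M\abs{\gamma-\tgamma}/\bigl((1-\max\{\gamma,\tgamma\})(1-\min\{\gamma,\tgamma\})\bigr)$, from which the lemma's looser bound follows \emph{a fortiori} since $2\abs{S}/(1-\max\{\gamma,\tgamma\})^2 \ge 1$. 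Your termwise alternative via $\abs{\gamma^t-\tgamma^t}\le \abs{\gamma-\tgamma}\, t \max\{\gamma,\tgamma\}^{t-1}$ is likewise valid and dispenses with matrices entirely. What your approach buys is a sharper constant and a more elementary, self-contained proof; what the paper's approach buys is uniformity of technique, since the $f^s_\pi$ machinery is needed anyway for the analysis of degenerate points (and its per-state structure is what later gets tracked through \Cref{lmm:root-distance}).
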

	
	\begin{restatable}{theorem}{existEpsSPE}
		\label{thm:exist-eps-spe}
		Suppose $\gamma^{*} \coloneqq \lim_{t \to \infty} \vargamma(t)$ exists and $\gamma^{*} \in [0,1]$. For any $\epsilon > 0$, there exists an $\epsilon$-SPE $\ppi$ that is {\em eventually constant}, i.e., there exists a $T \in \mathbb{N}$ such that $\pi_t = \pi_T$ for all $t \ge T$.
	\end{restatable}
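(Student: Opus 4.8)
The plan is to split on whether the limit $\gamma^*$ equals $1$, and to exploit the crucial difference between an exact and an approximate SPE: for an $\epsilon$-SPE the tail players need only a \emph{single} static policy that is approximately, rather than exactly, optimal across all their (nearby) discount factors. This is exactly what lets us drop the assumption $\gamma^*\notin\Gamma$ used in \Cref{thm:exist-SPE}. Even when $\gamma^*$ is a degenerate point---so that $\Pi_\gamma^*$ may jump as $\gamma$ crosses $\gamma^*$ and no single policy is simultaneously exactly optimal on both sides---a policy optimal \emph{at} $\gamma^*$ stays within $\epsilon$ of optimal at all sufficiently close $\gamma$, by the continuity supplied by \Cref{lmm:eSPE-bound}.

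The case $\gamma^*=1$ is disposed of immediately: since $\Gamma\subseteq[0,1)$ we have $1\notin\Gamma$, so $\gamma^*\in[0,1]\setminus\Gamma$ and \Cref{thm:exist-SPE} already produces an eventually constant \emph{exact} SPE, which is a fortiori an $\epsilon$-SPE. (One could instead argue directly via \Cref{lem:G-is-finite} and \Cref{lmm:opt-Pi-interval-G}: the interval $(\max\Gamma,1)$ contains no degenerate point, so $\Pi_\gamma^*$ is constant there, and assigning any member of this common set to all late players makes each of them exactly optimal.)

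For $\gamma^*<1$, fix $\tilde\pi\in\Pi_{\gamma^*}^*$ and consider the per-state deviation gain $G(\gamma,s)\coloneqq \max_{a\in A_s} Q_\gamma^{\tilde\pi}(s,a)-V_\gamma^{\tilde\pi}(s)$, which vanishes at $\gamma=\gamma^*$ by the optimality condition \eqref{eq:V-pi-gamma}. By \Cref{lmm:eSPE-bound}, $\gamma\mapsto V_\gamma^{\tilde\pi}(s)$ is Lipschitz on any subinterval of $[0,1)$ bounded away from $1$; since $Q_\gamma^{\tilde\pi}(s,a)=R(s,a)+\gamma\sum_{s'}P(s,a,s')V_\gamma^{\tilde\pi}(s')$ is then continuous in $\gamma$, and a finite maximum of continuous functions is continuous, each $G(\cdot,s)$ is continuous with $G(\gamma^*,s)=0$. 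As $S$ is finite, I can pick a single $\delta>0$ with $G(\gamma,s)\le\epsilon$ for all $s$ and all $\gamma\in(\gamma^*-\delta,\gamma^*+\delta)\cap[0,1)$, and then choose $T$ so that $\vargamma(t)$ lies in this neighbourhood for every $t\ge T$. Now set $\pi_t=\tilde\pi$ for all $t\ge T$ and build $\pi_{T-1},\dots,\pi_0$ by the backward induction of \Cref{alg:construct-SPE}. For a tail player $t\ge T$, \Cref{obs:V-constant-tail} gives $u_t(\ppi|s)=V_{\vargamma(t)}^{\tilde\pi}(s)$, while any unilateral deviation leaves the future (all $\tilde\pi$) intact, so by \Cref{obs:V-constant-tail} applied from step $t+1$ its value equals $Q_{\vargamma(t)}^{\tilde\pi}(s,a)$; hence the best deviation gain is $\max_{a\in A_s}Q_{\vargamma(t)}^{\tilde\pi}(s,a)-V_{\vargamma(t)}^{\tilde\pi}(s)=G(\vargamma(t),s)\le\epsilon$. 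For a head player $t<T$, the backward induction sets $\pi_t(s)\in\argmax_{a\in A_s} Q_{\vargamma(t),t}^{\ppi}(s,a)$ against the fixed future, so its deviation gain is $0\le\epsilon$. Thus $\ppi$ is an $\epsilon$-SPE, eventually constant by construction.

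I expect the main obstacle to be precisely the breakdown of the continuity bound at $\gamma^*=1$, where the $(1-\max\{\gamma,\tgamma\})^3$ factor in the denominator of \Cref{lmm:eSPE-bound} blows up; this is why that case must be peeled off and handled through the finiteness of $\Gamma$ (via \Cref{thm:exist-SPE}) rather than by a continuity argument. The only other point needing care is confirming that a tail player's unilateral deviation reduces exactly to the constant-discount quantity $Q_{\vargamma(t)}^{\tilde\pi}(s,a)$, which follows cleanly from \Cref{obs:V-constant-tail}.
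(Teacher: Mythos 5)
Your proof is correct and follows essentially the same route as the paper's: both use the continuity bound of \Cref{lmm:eSPE-bound} to show that a single static policy is $\epsilon$-optimal for every tail player, assign it to all players from $T$ onward, and complete the head players by the backward induction of \Cref{alg:construct-SPE}, dispatching the non-degenerate case (which includes $\gamma^*=1$) via \Cref{thm:exist-SPE}. The only cosmetic difference is that you anchor the tail policy at the limit point $\gamma^*$ and argue softly that the per-state deviation gain vanishes there, whereas the paper anchors at $\vargamma(T)$ and derives explicit quantitative bounds by comparing the $\vargamma(t)$-optimal and $\vargamma(T)$-optimal policies (a formulation its later algorithmic results reuse).
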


	\subsection{Computing an $\epsilon$-SPE}
	
	The $\epsilon$ slackness introduced by $\epsilon$-SPE appears to suggest that it suffices to consider a finite time horizon: a player can cut off the time horizon up to a certain (finite) future time step, beyond which the sum of the discounted rewards is sufficiently small to be ignored. This is nevertheless not the case. Even if the horizon is cut off, there are still infinitely many players in the game and each player's payoff is influenced by the subsequent players before the cutting off point. Hence, cutting off the time horizon does not reduce our consideration to a finite number of time steps.

	To compute an $\epsilon$-SPE, we use the continuity argument in \Cref{lmm:eSPE-bound}. 
	If we can pin down a time step $t$ after which the tail of $\vargamma$ is contained in a sufficiently small interval, we can use $\vargamma(t)$ to compute an SPE for the subgame as if $\vargamma$ is constant after $t$. This approximates an actual SPE provided that the tail of $\vargamma(t)$ is sufficiently small.
	Hence, the time complexity depends on the rate at which $\vargamma$ converges.
	In accordance with our existence proof, let $d$ be such that
	\[
	\min_{\gamma \in \Gamma}\ \left| \lim_{t\to \infty}g(t) - \gamma \right| \ge d\thinspace.
	\]
	To derive a general result, we also assume that there is an oracle $\mathcal{A}$ that, for any given $\delta > 0$, computes a time step $T$ such that $|\vargamma(t) - \vargamma(T)| \le \delta$ for all $t \ge T$. 
	More specifically, we introduce the following notion called $(\alpha,\beta)$-convergence for the discount function. 
	
	\begin{definition}[$(\alpha,\beta)$-convergence]
		Let $\alpha: \mathbb{R}^2 \to \mathbb{R}$ and $\beta : \mathbb{R}^2 \to \mathbb{R}$.
		A class $\calG$ of discount functions is {\em $(\alpha,\beta)$-convergent}
		if there is an oracle $\mathcal{A}$ such that: 
		for any $\vargamma \in \calG$ and any $\delta > 0$ with bit-size $d$, $\mathcal{A}$ computes an integer $T$ in time $\alpha(|\vargamma|, d)$ such that $\abs{\vargamma(t) - \vargamma(T)} \le \delta$ for all $t \ge T$, and $T \le \beta(|\vargamma|,d)$, where $|\vargamma|$ denotes the bit-size of the representation of $\vargamma$.
	\end{definition}
	
	For example, the class of down-step functions, defined in \eqref{eq:down-step} and encoded as $(\gamma, T)$ (in binary), is $(\alpha,\beta)$-convergent with $\alpha(x,y) = x$ and $\beta(x, y) = O(2^{x})$. Our next lemma provides a lower bound on the distance between any two points in the set $\Gamma$.

	\begin{restatable}{theorem}{compTimeAlphaBetaConvergent}
		\label{thm:eps-SPE-run-time}
		Suppose that $\vargamma$ is $(\alpha,\beta)$-convergent and $\lim_{t\to \infty} \vargamma(t) < 1 - c$ for a known constant $c$.
		Then an $\epsilon$-SPE can be computed in time 
		\[
		\alpha(|\vargamma|, d) + \poly(|A|,|S|) \cdot \left(\beta(|\vargamma|, d) \right)^2\thinspace,
		\]
		where $d = \log({M |S|}/{\epsilon}) + o(1)$.
	\end{restatable}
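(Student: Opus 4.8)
The plan is to couple the oracle $\mathcal{A}$ guaranteed by $(\alpha,\beta)$-convergence with Algorithm~\ref{alg:construct-SPE}. First I would fix a target slack $\delta$ (chosen below) and invoke $\mathcal{A}$ to obtain a time step $T \le \beta(|\vargamma|,d)$ with $\abs{\vargamma(t) - \vargamma(T)} \le \delta$ for all $t \ge T$. I would then pick any $\tilde{\pi} \in \Pi_{\vargamma(T)}^*$ (nonempty for every discount factor, and computable in $\poly(|A|,|S|)$ time), set $\pi_t = \tilde{\pi}$ for all $t \ge T$, and run Algorithm~\ref{alg:construct-SPE} to build $\pi_{T-1},\dots,\pi_0$ by backward induction. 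The claim is that the resulting eventually-constant dynamic policy $\ppi$ is an $\epsilon$-SPE, which is exactly what \Cref{thm:exist-eps-spe} asserts to exist; here I additionally track the cost of producing it.

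To verify the $\epsilon$-SPE condition I would split on the player index. For each $t < T$ the algorithm sets $\pi_t \in \argmax_{a} Q_{\vargamma(t),t}^{\ppi}(s,a)$, so \eqref{eq:SPE-v-func} holds exactly and player $t$ has zero deviation gain. The substance is in the tail players $t \ge T$, all of whom play $\tilde{\pi}$, so that $V_{\vargamma(t),t}^{\ppi}(s) = V_{\vargamma(t)}^{\tilde{\pi}}(s)$ by \Cref{obs:V-constant-tail}, and a one-step deviation of player $t$ yields gain $\max_{a} Q_{\vargamma(t)}^{\tilde{\pi}}(s,a) - V_{\vargamma(t)}^{\tilde{\pi}}(s)$. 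Writing $a^\star$ for the maximizer and inserting $\vargamma(T)$, I would bound this by $\bigl[Q_{\vargamma(t)}^{\tilde{\pi}}(s,a^\star) - Q_{\vargamma(T)}^{\tilde{\pi}}(s,a^\star)\bigr] + \bigl[Q_{\vargamma(T)}^{\tilde{\pi}}(s,a^\star) - V_{\vargamma(T)}^{\tilde{\pi}}(s)\bigr] + \bigl[V_{\vargamma(T)}^{\tilde{\pi}}(s) - V_{\vargamma(t)}^{\tilde{\pi}}(s)\bigr]$. The middle bracket is $\le 0$ because $\tilde{\pi}$ is optimal for $\vargamma(T)$, while the two outer brackets are controlled by \Cref{lmm:eSPE-bound} applied to the fixed policy $\tilde{\pi}$, together with the direct term $\abs{\vargamma(t)-\vargamma(T)}$ times a uniform bound on $V_{\vargamma(T)}^{\tilde{\pi}}$ coming from the explicit $\gamma$-factor in the $Q$-function.

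The hypothesis $\gamma^\star < 1 - c$ enters precisely here. Choosing $\delta < c/4$ forces $\vargamma(t) < 1 - c/2$ for all $t \ge T$: taking $t \to \infty$ in the oracle guarantee gives $\abs{\gamma^\star - \vargamma(T)} \le \delta$, hence $\vargamma(t) \le \gamma^\star + 2\delta < 1 - c/2$. Consequently every denominator $1 - \max\{\vargamma(t),\vargamma(T)\}$ appearing in \Cref{lmm:eSPE-bound} exceeds $c/2$, and each value $V_{\vargamma(T)}^{\tilde{\pi}}$ is bounded by $2M/c$. This makes the tail deviation gain of order $M\,|S|\,\delta / c^{4}$, so taking $\delta = \Theta\bigl(\epsilon\, c^{4} / (M\,|S|)\bigr)$ drives it below $\epsilon$. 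The bit-size of this $\delta$ is $d = \log(M\,|S|/\epsilon) + O(1)$ when $c$ is treated as a known constant, matching the stated $d$.

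Finally I would account for the runtime. The oracle call costs $\alpha(|\vargamma|,d)$; computing $\tilde{\pi}$ and each value function $V_{\vargamma(t)}^{\tilde{\pi}}$ is $\poly(|A|,|S|)$ via standard MDP optimization and linear-system solves; and Algorithm~\ref{alg:construct-SPE} performs $O(T^{2}\,|S|)$ such steps with $T \le \beta(|\vargamma|,d)$, contributing $\poly(|A|,|S|)\cdot \beta(|\vargamma|,d)^{2}$. Summing yields the claimed bound $\alpha(|\vargamma|,d) + \poly(|A|,|S|)\cdot\bigl(\beta(|\vargamma|,d)\bigr)^{2}$. I expect the main obstacle to be the tail-player estimate: obtaining a clean bound on the deviation gain that correctly combines the $V$-perturbation from \Cref{lmm:eSPE-bound} with the explicit $\gamma$-dependence of the $Q$-function, while keeping every $(1-\gamma)$ factor under control through $\gamma^\star < 1-c$, so that the required $\delta$ retains bit-size only logarithmic in $M\,|S|/\epsilon$.
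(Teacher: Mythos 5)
Your proposal is correct and takes essentially the same route as the paper: the identical algorithm (one oracle call with slack $\delta = \Theta\bigl(\epsilon c^4/(M|S|)\bigr)$, a constant tail $\tilde{\pi} \in \Pi_{\vargamma(T)}^*$, backward induction via Algorithm~\ref{alg:construct-SPE}), the same reliance on \Cref{lmm:eSPE-bound} with the $1-c$ gap keeping the $(1-\gamma)$ denominators under control, and the same runtime accounting. The only cosmetic difference is local: you bound the tail players' one-step deviation gain $\max_{a} Q_{\vargamma(t)}^{\tilde{\pi}}(s,a) - V_{\vargamma(t)}^{\tilde{\pi}}(s)$ by inserting $\vargamma(T)$ into the Q-function, whereas the paper compares $V_{\vargamma(t)}^{\tilde{\pi}}$ against the optimal static value for $\vargamma(t)$ and applies \Cref{lmm:eSPE-bound} twice; both verifications are valid.
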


	\begin{proof}
		We use Algorithm~\ref{alg:computing-eSPE} to compute an $\epsilon$-SPE. To see that it correctly computes an $\epsilon$-SPE, it suffices to argue that $(\pi_T,\pi_{T+1}, \dots)$ form an $\epsilon$-SPE for the subgame after $T$.
		
		Indeed, for any player $t\ge T$, we have $\abs{ \vargamma(t) - \vargamma(T)} \le D \le \frac{c^4 \cdot \epsilon}{2M|S|}$.
		Hence, according to \Cref{lmm:eSPE-bound}, we have 
		$\abs{V_{\vargamma(t)}^{\pi}(s) - V_{\vargamma(T)}^{\pi}(s)}
		\le \epsilon/2$ for any static policy $\pi$ and $s \in S$.
		Let $\pi \in \Pi_{g(t)}^*$.
		We have 
		\begin{align*}
			V_{\vargamma(t)}^{\pi}(s) - V_{\vargamma(t)}^{\tilde{\pi}}(s) 
			\le
			V_{\vargamma(T)}^{\pi}(s)  - V_{\vargamma(T)}^{\tilde{\pi}}(s) + 
			\abs{V_{\vargamma(t)}^{\pi}(s) - 
				V_{\vargamma(T)}^{\pi}(s)} +
			\abs{V_{\vargamma(t)}^{\tilde{\pi}}(s) - V_{\vargamma(T)}^{\tilde{\pi}}(s)} 
			\le \epsilon\thinspace,
		\end{align*}
		where $\tilde{\pi} \in \Pi_{g(T)}^*$ as in Algorithm~\ref{alg:computing-eSPE}.
		Moreover, since the optimal static policy is at least as good as any dynamic policy for player $t$. 
		This means that for any strategy profile $\ppi'$ resulting from a deviation of player $t$,
		\[u_t(\ppi'|s) - u_t(\pi_T,\pi_{T+1}, \dots|s) \le V_{\vargamma(t)}^{\pi}(s) - V_{\vargamma(t)}^{\tilde{\pi}}(s) = \epsilon\thinspace. \]
		Hence, Algorithm~\ref{alg:computing-eSPE} generates an $\epsilon$-SPE.
		
		To see the time complexity of the algorithm, note that it takes time $\alpha(|\vargamma|, \log D)$ \linebreak to run $\mathcal{A}$.
		In addition to that, the time it takes to run Algorithm~\ref{alg:construct-SPE} is bounded by \linebreak $\left(\beta(|\vargamma|, d) \right)^2 \cdot \poly(|A|,|S|)$.
	\end{proof}

	\begin{algorithm}[t]
		\caption{Computing an $\epsilon$-SPE}\label{alg:computing-eSPE}
		\SetKwInOut{Input}{Input}
		\SetKwInOut{Output}{Output}
		
		\BlankLine
		
		\Input{$\epsilon > 0$}
		\Output{an $\epsilon$-SPE $\ppi = (\pi_t)_{t=0}^\infty$}
		
		\BlankLine
		
		$D \leftarrow c^4 \cdot \min \{ \epsilon / 4M |S|,\ c \}$;
		
		$T \leftarrow \mathcal{A}(D)$;
		
		$\tilde{\pi} \leftarrow$ an arbitrary policy in $\Pi_{g(T)}^*$;
		
		$\pi_t \leftarrow \tilde{\pi}$, for all $t = T, T+1,\dots$;
		
		\mbox{Run Algorithm~\ref{alg:construct-SPE} on input $\tilde{\pi}$ to construct $\pi_0,\dots,\pi_{T-1}$;}
	\end{algorithm}
	
	We remark that \Cref{thm:eps-SPE-run-time} only requires the mild assumption of a known constant gap between $1$ and the limit point of $g$. 
	If $c$ is unknown or the gap cannot be bounded by a constant, an $\epsilon$-SPE can be computed via a more sophisticated algorithm with a higher time complexity. We provide this algorithm in the full version of this paper for theoretical interest.
	
	Via \Cref{thm:eps-SPE-run-time}, an exponential upper bound of the complexity of computing an $\epsilon$-SPE 
	can be derived when $\vargamma$ is the down-step function defined in \eqref{eq:down-step} (for which $\beta(|\vargamma|,d) = 2^{O(|\vargamma|)}$). This does not require any assumption on the convergence of $g$ with respect to $\Gamma$.  
	Better bounds can be derived if $\vargamma$ converges faster, e.g., $\beta(|\vargamma|,d) = d$ or even $2^{O(d)}$, or when $g$ is not a variable of the model.

	\section{Conclusion}
	\label{section:conclusion}
	
	We study a model of infinite-horizon MDPs with time-varying discounting.
	Our model seizes the idea of geometric discounting, but with time-varying discount factors, and it allows for a game-theoretic interpretation. We study the SPE of the underlying game. Results on the existence and computation of an exact or an $\epsilon$-SPE are presented.
	Future work may consider other types of discount functions, such as those described by \citet{lattimore2014general}.

	\newpage
	\section*{Acknowledgements}
	The authors would like to thank the anonymous reviewers for their insightful comments.
	This project was supported by DFG project 389792660 TRR 248--CPEC.
	Part of the work was done when Jiarui Gan was a postdoc at MPI-SWS, when he was supported by the European Research Council (ERC) under the European Union's Horizon 2020 research and innovation programme (grant agreement No. 945719).
	Part of the work was done when Annika Hennes was an intern at MPI-SWS. Further, she was partially funded by the Deutsche Forschungsgemeinschaft (DFG, German Research Foundation) – project number 456558332. 
	Goran Radanovic's research was, in part, funded by the Deutsche Forschungsgemeinschaft (DFG, German Research Foundation) – project number 467367360.

	\bibliographystyle{plainnat}
	\bibliography{ms}
	
	\clearpage
	
	\appendix

\section{Existence of an SPE}

\hSumOfRationals*

\begin{proof}
	Fix a deterministic policy $\pi$ and a state $s^\star \in S$.
	For each state $s \in S$ and each $i\in \mathbb{N}$, let $K_s^i$ be the number of steps between the $(i-1)$-th and the $i$-th times the state is in $s$, when one starts from $s^\star$ and takes actions according to policy $\pi$; hence, $K_s^i$ is a random variable.
	These variables are independent and identical for $i \ge 2$.
	This way, we can write the cumulative reward as 
	\begin{equation}
		\label{eq:V-f}
		V_{\gamma}^{\pi}(s^\star) = 
		\sum_{s \in S} f_\pi^s(\gamma)\thinspace,
	\end{equation}
	where 
	\begin{align}
		f_\pi^s(\gamma)
		&\coloneqq \expct{\gamma^{K_s^1} \bigl(R(s,\pi(s)) \nonumber + \gamma^{K_s^2}\bigl(R(s, \pi(s)) \nonumber + \gamma^{K_s^3}\bigl(R(s, \pi(s)) + \dots \bigr) \bigr) \bigr)} \nonumber\\
		&= \expct{\gamma^{K_s^1} \cdot R(s, \pi(s)) } + \expct{\gamma^{K_s^1}} \cdot F
		\label{eq:f-pi-s-gamma}
	\end{align}
	with 
	\[F \coloneqq \mathbb{E} \left [ \gamma^{K_s^2}\left( R(s, \pi(s)) + \gamma^{K_s^3}\left( R(s, \pi(s)) + \dots \right) \right) \right]\thinspace.\]
	As shown above, we let the last term be $F$.
	Since $K_s^2, K_s^3, \dots$ are i.i.d., we then have 
	\[
	F = \expct{ \gamma^{K_s^2}} \cdot R(s, \pi(s)) + \expct{ \gamma^{K_s^2}} \cdot F\thinspace,
	\]
	which means that
	\[
	F = \frac{\expct{\gamma^{K_s^2} }}{1 - \expct{\gamma^{K_s^2} }} \cdot R(s, \pi(s))\thinspace.
	\]
	Substituting the above into \eqref{eq:f-pi-s-gamma} gives
	\begin{equation}
		\label{eq:f-pi-s-gamma-2}
		f_\pi^s(\gamma) 
		= \E\bigl[ \gamma^{K_s^1} \bigr]\cdot R(s, \pi(s))\cdot\frac{1}{1-\E[\gamma^{K_s^2}]}\thinspace.
	\end{equation}
	It remains to do some analysis on $\E[ \gamma^{K_s^1} ]$ and $\E[ \gamma^{K_s^2} ]$.
	Let $P_{s'}^s(i)$ denote the probability that we reach state $s'$ in $i$ steps when we start from state $s$ and take actions according to $\pi$, i.e.,
	\[ 
	P_s^{s'}\!(i) \coloneqq 
	\Pr\left(s_i = s' \text{ and } s_t \neq s' \text{ for all } t < i \mid s_0 = s, \pi \right)\thinspace. 
	\] 
	Let $p_s = \left[P \left(s, \pi(s), s'\right) \right]_{s' \in S \setminus \{s\}}^\top$ be a (row) vector, where each element $P(s, \pi(s), s')$ is the probability of the state transitioning from $s$ to $s'$ after action $\pi(s)$ is taken. Similarly, let $ q^s = \left[P \left(s', \pi(s'), s \right) \right]_{s' \in S \setminus \{s\}} $ be a (column) vector and $M= \left[P\left( s', \pi(s'), s''\right)\right]_{s', s'' \in S \setminus \{s\}}$ be a matrix.
	We can now write
	\[ 
	\expct{\gamma^{K_s^2}} = \sum_{i=1}^{\infty} P_s^s(i) \cdot \gamma^i\thinspace. 
	\]
	Moreover, for $i \geq 2 $, we have $ P_s^s(i) = p_s\cdot M^{i-2}\cdot q^s $, and hence
	\begin{align*}
		\sum_{i=2}^{\infty}P_s^s(i) \cdot \gamma^i &= \sum_{i=2}^{\infty} \left(p_s\cdot M^{i-2}\cdot q^s \cdot \gamma^i \right) \\
		&= p_s\cdot\biggl(\sum_{i=2}^{\infty}M^{i-2}\gamma^i\biggr)\cdot q^s \\
		&= p_s\cdot \gamma^2\cdot\biggl(\sum_{i=2}^{\infty}(M\gamma)^{i-2}\biggr) \cdot q^s\thinspace. 
	\end{align*} 
	Letting $ B(\gamma) \coloneqq \sum_{i=2}^{\infty}(M\gamma)^{i-2}$ then gives
	\[ 
	B(\gamma) = I + (M\gamma)\sum_{i=2}^{\infty}(M\gamma)^{i-2} = I + (M\gamma) \cdot B(\gamma)\thinspace, 
	\]
	which is equivalent to
	\[ 
	B(\gamma) = \left(I - M\gamma \right)^{-1}\thinspace. 
	\]
	Indeed, since all entries in $M$ are at most 1 and $\gamma < 1$, 
	we have $ \lim_{i\rightarrow \infty} (M\gamma)^{i} = 0 $. This implies that the Neumann series $ \sum_{i=0}^{\infty} (M\gamma)^{i} $ converges to $(I - M\gamma)^{-1} $, which shows that the inverse of $ I - M\gamma $ exists.
	The Cayley-Hamilton theorem \citep{hamilton1853lectures} implies that the entries of $\left(I - \gamma M \right)^{-1}$ are rational functions where the numerator and denominator are polynomials in $ \gamma $ of degree at most the size of $M$, i.e., $|S| - 1$.
	
	To sum up, we have
	\begin{align}
		\E\bigl[ \gamma^{K_s^2} \bigr] 
		&= P_s^s(1) \cdot \gamma + p_s\cdot\gamma^2\cdot B(\gamma) \cdot q^s \nonumber \\
		&= P\left(s,\pi(s),s \right) \cdot \gamma + p_s\cdot\gamma^2\cdot B(\gamma) \cdot q^s\thinspace, \label{eq:E-gamma-K-s-2}
	\end{align}
	which is a function of the form
	\begin{align}
		\sum_{i=1}^n \sum_{j=1}^n \frac{\psi^i(\gamma)}{\phi^j(\gamma)}\thinspace,
	\end{align}
	where $\psi^i(\gamma)$ and $\phi^j(\gamma)$ are polynomial functions of $\gamma$ with finite degrees.
	A similar reasoning can be applied to $ \E\bigl[ \gamma^{K_s^1} \bigr] $, if we consider 
	$\E\bigl[ \gamma^{K_s^1} \bigr] = \sum_{i=1}^{\infty}P_{\!s^\star}^s\!(i) \cdot \gamma^i$.
	We have
	\begin{align*}
		\E\bigl[ \gamma^{K_s^1} \bigr] 
		&= P_{\!s^\star}^s\!(1) \cdot \gamma + p_{s^\star}\cdot\gamma^2\cdot B(\gamma)\cdot q^s \\
		&= P \left(s, \pi(s), s^\star \right) \cdot \gamma + p_{s^\star}\cdot\gamma^2\cdot B(\gamma)\cdot q^s\thinspace,
	\end{align*}
	which is again a rational function with finite degrees.
	Hence, for all $s\in S$ the functions $ f_{\pi}^s $ and hence also their sum $V_\gamma^\pi(s^\star)$ are rational functions of 
	the form shown in \eqref{eq:poly-form}.
	It follows that for any choice of $\pi_1, \pi_2$, the function $h_{\pi_1, \pi_2}^{s^\star}$, according to its definition in \eqref{eq:h-s}, is also 
	of the same form.
\end{proof}

\section{EXPTIME-Hardness of {\sc SPE-Start}}

\EXPTIMEhardnessDownStep*
\begin{proof}
	We reduce an arbitrary instance $(\calM,a^\dag,T)$ of {\sc ValIt} to the following instance of {\sc SPE-Start}. Since {\sc ValIt} is EXPTIME-complete~\citep{shirmohammadi2019complexity}, the result follows.     
	
	Let $\widetilde{\calM} \coloneqq (\widetilde{S}, \widetilde{A}, \widetilde{R}, \widetilde{P}, s_0, \vargamma(\cdot))$ be an MDP with $\widetilde{S} \coloneqq S \cup \{ s^*, s^{**} \} $, $\widetilde{A} \coloneqq A \cup \{a^*, a^{**} \}$,
	i.e., we add two additional states and two additional actions.
	As illustrated in Figure~\ref{figure:reduction-to-one-discount-factor},
	taking action $a^*$ in any state $s \in S$ results in a deterministic state transition to $s^*$.
	Moreover, in $s^*$ the only action available is $a^{**}$, which takes the state to the terminal state $s^{**}$ with probability one.
	The transition dynamics remain the same with respect to all other state-action pairs $(s,a) \in S \times A$.
	The reward function fulfils $\widetilde{R}\colon \widetilde{S} \times \widetilde{A} \rightarrow \mathbb{R}$, such that
	\begin{align*}
		\widetilde{R}(s, a) = 
		\begin{cases}
			R(s,a) & \text{ if } (s,a) \in S \times A \\
			U + 1 & \text{ if } a = a^* \\
			- 2(U + 1) /\gamma & \text{ if } a = a^{**}
		\end{cases}
	\end{align*}
	where $U \coloneqq \max_{(s,a)\in S\times A} \frac{1}{1 - \gamma} \cdot \abs{R(s,a)}$.
	Finally, we let the discount function $\vargamma$ be a down-step function as defined in \eqref{eq:down-step}, where $\gamma$ and $T$ are taken directly from the {\sc ValIt} instance.
	
	We will next argue that an optimal policy for the {\sc ValIt} instance $(\calM, a^\dag, T)$ is equivalent to an SPE of the game defined by $\widetilde{\calM}$ and $\vargamma$.
	Specifically, we can map every {\sc ValIt} solution $\ppi$ to an {\sc SPE-Start} solution $\widetilde{\ppi}$ in a way such that $\widetilde{\pi}_t(s) = \pi_t(s)$ for all $t \le T$ and $s \in S$.
	We argue that
	$\ppi = (\pi_t)_{t=0}^T$ is an optimal policy for the {\sc ValIt} instance if and only if $\widetilde{\ppi} = (\widetilde{\pi}_t)_{t=0}^\infty$ is an SPE.
	Notably, we have $\widetilde{\pi}_0(s_0) = \pi_0(s_0)$, so the equivalence implies that an answer to the {\sc SPE-Start} instance also answers {\sc ValIt} and hence, {\sc ValIt} is reduced to {\sc SPE-Start}.
	
	To prove the above statement, we first note that in an SPE, players $0,1,\dots,T$ will never choose action $a^*$.
	Indeed, suppose player $t \le T$ is the first one who takes action $a^*$, then the process will terminate in two steps at $s^{**}$, and the sum of discounted rewards for player $t$ (for whom we have $\vargamma(t) =\gamma$) in the subsequent two steps will be 
	\[
	\widetilde{R}(s, a^*) + \gamma \cdot \widetilde{R}(s^*, a^{**}) = - U - 1\thinspace,
	\]
	where $s \in S$ since this player is the first one to take action $a^*$.
	In contrast, if this player switches to any action $a \neq a^*$, the environment will remain in a state $s' \in S$ for at least one more step, whereby the subsequent  sum of discounted rewards is at least
	\begin{equation} \label{eq:reduction-sum-of-rewards-for-staying-in-original-mdp}
		- \sum_{i=0}^{k-1} \gamma^i \cdot  R^* + \gamma^{k} \cdot \widetilde{R}(s'', a^*) + \gamma^{k+1} \widetilde{R}(s^*, a^{**})\thinspace,
	\end{equation}
	where $R^* \coloneqq \max_{(s,a) \in S\times A} \abs{R(s,a)} = U \cdot (1 - \gamma)$, $k \ge 1$ is the number of time steps where the state remains in $S$, and $s'' \in S$. 
	We have 
	\begin{align*}
		\eqref{eq:reduction-sum-of-rewards-for-staying-in-original-mdp} &=
		- \sum_{i=0}^{k-1} \gamma^i \cdot  R^* + \gamma^{k} \cdot (-U - 1) \\
		&=
		- U - \gamma^{k} > - U - 1\thinspace,
	\end{align*}
	so player $t$ obtains a strictly higher utility than when playing $a^*$.
	Hence, the state of the environment will remain in $S$ when it is for player $T+1$ to play an action.
	Since $\vargamma(T+1) = 0$, player $T+1$ will surely play action $a^*$, whereby it obtains an immediate reward of $U + 1$, which is strictly larger than what can be obtained by taking any other actions according to the definition of $\widetilde{R}$.
	Hence, the game will proceed in exactly the same way after time step $T$, irrespective of the actions performed by players $0,\dots,T$. 
	From the perspective of each player $t = 0,1,\dots,T$, their actions only change the game up to time step $T$, so $Q_{g(t),T}^\ppi(s,a)$ is a constant independent of $\ppi$, $s$, and $a$.
	This means that an SPE $\ppi$ in this particular game, which can be characterized by \eqref{eq:SPE-v-func}--\eqref{eq:Q-func-vargamma} with constant $Q_{g(t),T}^\ppi$, is effectively equivalent to an optimal policy defined through \eqref{eq:valit-Q} and \eqref{eq:valit} (Note that \eqref{eq:valit-Q} and \eqref{eq:valit} define the same optimal policies if we start by letting $Q_T(s,a)$ be any arbitrary constant).
\end{proof}

%\paragraph{Remark} 
%The down-step function we defined in \eqref{eq:down-step} may appear to only be representative of decreasing functions. However, the EXPTIME-hardness remains if we consider simple increasing functions $\vargamma$ such that $\vargamma(t) = \gamma_1$ if $t \le T$, $\vargamma(t) = \gamma_2$ if $t > T$, and $\gamma_1 < \gamma_2$; our proof can be easily extended to such functions.

% Chapter Computation

\section{Computation of SPEs and $\epsilon$-SPEs}

\epsSPEbound*

\begin{proof}
	Without loss of generality, we can assume $\gamma > \tgamma$. We will write $\delta = \gamma - \tgamma$. Note that this implies $\delta \le \gamma$.
	Since $V_{\gamma}^{\pi}(s) = \sum_{s'\in S} f^{s'}_\pi(\gamma)$, it suffices to bound $\abs{f^s_\pi(\gamma) - f^s_\pi(\tilde{\gamma})}$ for every $s \in S$.
	We have
	\begin{align*}
		\abs{f^s_\pi(\gamma) - f^s_\pi(\tilde{\gamma})}
		&\le  \biggl|\frac{\E[\gamma^{K^1_s}] }{1 - \E[\gamma^{K^2_s}] }- \frac{\E[\tgamma^{K^1_s} ]}{1 - \E[\tgamma^{K^2_s}] } \biggr| \cdot \abs{R(s,\pi(s))}\\
		&\le M \cdot \biggl(\frac{(1 - \E[\gamma^{K^2_s} ]) \cdot  \bigl|\E[\gamma^{K^1_s}]-\E[\tgamma^{K^1_s}] \bigr|}{(1 - \E[\gamma^{K^2_s}])(1 - \E[\tgamma^{K^2_s}]  )} +\frac{\E[\gamma^{K^1_s}] \cdot  \bigl| \E[\gamma^{K^2_s}] - \E[\tgamma^{K^2_s}]\bigr| }{(1 - \E[\gamma^{K^2_s}])(1 - \E[\tgamma^{K^2_s}] )}\biggr) \\
		&\le M \cdot \frac{\bigl|\E[\gamma^{K^1_s}]-\E[\tgamma^{K^1_s}]\bigr|}{(1 - \E[\tgamma^{K^2_s}]  )} + M \cdot  \frac{\E[\gamma^{K^1_s}]}{(1 - \E[\gamma^{K^2_s}] )} \cdot  \frac{\bigl| \E[\gamma^{K^2_s}] - \E[\tgamma^{K^2_s}] \bigr|}{(1 - \E[\tgamma^{K^2_s}] )}\thinspace.
	\end{align*}
	Now consider the first term in the summation:
	\begin{equation}
		\frac{\abs*{\expct{\gamma^{K^1_s}}-\expct{\tgamma^{K^1_s}} }}{\left(1 - \expct{\tgamma^{K^2_s}}  \right)}
		= \abs*{\expct{\gamma^{K^1_s} - \tgamma^{K^1_s}} } \cdot  \left(1 - \expct{\tgamma^{K^2_s}}  \right)^{-1}\thinspace. \label{eq:temp-stochastic-1} 
	\end{equation}
	We now use the following inequality. For any $l$, we have
	\[
	\gamma^l - \tgamma^l = \gamma^l - (\gamma - \delta)^l = \gamma^l \left(1 - \left(1 - \frac{\delta}{\gamma}\right)^l \right) \le \gamma^l \cdot \frac{\delta l}{\gamma}\thinspace.
	\]
	The last inequality uses $(1-x)^n \ge 1 - nx$ for $x \le 1$ and $\delta / \gamma \le 1$. 
	Now we have the following bound on the first term in \Cref{eq:temp-stochastic-1}.
	\begin{align*}
		\expct{\gamma^{K^1_s} - \tgamma^{K^1_s}} &= \sum_{d = 1}^{\infty} \Pr\left(K^1_s = d\right) \cdot \left( \gamma^d - \tgamma^d\right) \\
		&\le \delta \sum_{d=1}^{\infty} d \gamma^{d-1} = \frac{\delta}{(1-\gamma)^2}\thinspace.
	\end{align*}
	Substituting the above bound in \Cref{eq:temp-stochastic-1} and using $\expct{\tgamma^{K^2_s}} \le \tgamma$, we get the following bound:
	\begin{equation*}
		\frac{\abso{\expct{\gamma^{K^1_s}}-\expct{\tgamma^{K^1_s}} }}{\left(1 - \expct{\tgamma^{K^2_s}}  \right)} \le \frac{\delta}{(1-\gamma)^2(1-\tgamma)}\thinspace.
	\end{equation*}
	We can establish a similar bound on 
	\[
	\frac{\abs*{ \expct{\gamma^{K^2_s}} - \expct{\tgamma^{K^2_s}} }}{ \left(1 - \expct{\tgamma^{K^2_s}}  \right)}\thinspace.
	\]
	We can now bound $\abs{f^s_\pi(\gamma) - f^s_\pi(\tgamma)}$ as follows.
	\begin{align*}
		\abs{f^s_\pi(\gamma) - f^s_\pi(\tgamma)} &\le M \frac{\delta}{(1-\gamma)^2(1-\tgamma)} + M \frac{1}{1 - \gamma} \cdot \frac{\delta}{(1-\gamma)^2(1-\tgamma)}\\ &\le \frac{2 M \delta}{(1-\gamma)^3 (1-\tgamma)}\thinspace.
	\end{align*}
	Summing the above bound over the $|S|$ states we get the desired bound. 
\end{proof}

\existEpsSPE*

\begin{proof}
	First, we have already proven that in the case $\gamma^* \in [0,1] \setminus \Gamma$, there exists an exact SPE. Hence, in what follows, we assume that $\gamma^* \in \Gamma$.
	
	Now instead of finding a time step $T$ such that in the subsequent time steps $g(t)$ will surely lie between two neighbouring points in $\Gamma$, we find a $T$ such that $\abs{g(t)- g(T)} \le \delta$ for all $t \ge T$, where we let $\delta$ be a number such that 
	\[
	\delta \le \epsilon \cdot \frac{1}{4 M \cdot |S|} \cdot \left(\frac{1 - \gamma^*}{2} \right)^4
	\]
	and 
	\[
	\delta \le \frac{1 - \gamma^*}{2}\thinspace.
	\]
	By the convergence assumption and the fact that $1 \notin \Gamma$ by definition, there indeed exists such a number $\delta >0$.
	It then holds for all $t \ge T$ that
	\begin{align}
		\label{eq:exist-eps-SPE-1-max-gt-gT}
		1 - \max\{g(t), g(T)\} \ge 1 - (\gamma^* + \delta) \ge \frac{1 - \gamma^*}{2}\thinspace.
	\end{align}
	Hence,
	\begin{align}
		\abs{g(t) - g(T)} 
		&\le \epsilon \cdot \frac{1}{4 M \cdot |S|} \cdot \left(\frac{1 - \gamma^*}{2} \right)^4 \nonumber\\
		&\le \epsilon \cdot \frac{1}{4 M \cdot |S|} \cdot \left(1 - \max\{g(t), g(T)\} \right)^4 \nonumber\\
		&\le \cdot \frac{\epsilon}{4 M \cdot |S|} \cdot \left(1 - \max\{g(t), g(T)\} \right)^3 \cdot \left(1 - \min\{g(t), g(T)\} \right)\thinspace.
		\label{eq:exist-eps-SPE-gt-gT-le}
	\end{align}
	According to \Cref{lmm:eSPE-bound}, this means that
	\[
	\abs{V_{g(t)}^{\pi}(s) - V_{g(T)}^{\pi}(s)}
	\le \epsilon /2 
	\]
	for any policy $\pi \in \Pi$.
	In particular, let $\varpi^* \in \Pi_{g(t)}^*$ and $\pi^* \in \Pi_{g(T)}^*$ be the optimal policies with respect to constant discount factors $g(t)$ and $g(T)$, respectively.
	Applying the above inequality to these two policies gives
	\begin{align*}
		\abs{V_{g(t)}^{\varpi^*}(s) - V_{g(T)}^{\pi^*}(s)}
		\le& \abs{V_{g(t)}^{\varpi^*}(s) - V_{g(T)}^{\varpi^*}(s)} + \abs{V_{g(t)}^{\pi^*}(s) - V_{g(T)}^{\pi^*}(s)} \le \epsilon\thinspace.
	\end{align*}
	Hence, $\pi^*$ is a near-optimal policy for all players $t \ge T$.
	We can then construct an $\epsilon$-SPE $\ppi = (\pi_t)_{t=0}^\infty$ by first letting $\pi_t = \pi^*$ for all $t \ge T$, and then use Algorithm~\ref{alg:construct-SPE} to construct the preceding part of $\ppi$ through backward induction.
\end{proof}

\subsection{When Limit Point of $g$ Not Bounded Away From $1$}

We first prove the following lemma, which provides a lower bound for distances between points in $\Gamma \cup \{0,1\}$.
Intuitively, we can use this bound to locate a time step $t$ such that the tail of $g$ falls in between the largest point in $\Gamma$ and $1$. Hence, the optimal policy with respect to $g$ is eventually constant and we can construct an exact SPE for the subgame after $t$ and use backward induction to construct policies for time steps before $t$.

\begin{restatable}{lemma}{rootDistance}
	\label{lmm:root-distance}
	For any $\gamma_1, \gamma_2 \in \Gamma \cup \{0,1\}$ such that $\gamma_1 \neq \gamma_2$, we have 
	\begin{align}
		\label{eq:D}
		\abs{\gamma_1 - \gamma_2} \ge D \coloneqq 
		(n \cdot m)^{- b \cdot (n \cdot m)^{n^5}}\thinspace,
	\end{align}
	where $n = \abs{S}$ and $m = \abs{A}$ are large enough.
\end{restatable}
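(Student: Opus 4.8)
The plan is to realize every element of $\Gamma \cup \{0,1\}$ as a root of a single integer polynomial of controlled degree and height, and then invoke a classical root-separation inequality. By \Cref{lem:h-function-as-sum-of-rational-functions}, for any two static policies the difference $h^s_{\pi_1,\pi_2}(\gamma) = V_{\gamma}^{\pi_1}(s) - V_{\gamma}^{\pi_2}(s)$ is a rational function $\Psi/\Phi$ of $\gamma$. Concretely, writing the Bellman system $V_{\gamma}^{\pi} = (I - \gamma P^\pi)^{-1} R^\pi$ (with $P^\pi, R^\pi$ the transition matrix and reward vector induced by $\pi$) and applying Cramer's rule, both numerator and denominator are determinants of $|S|\times|S|$ matrices whose entries are affine in $\gamma$, so each has degree at most $n$. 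Clearing the common denominator of the transition probabilities and rewards, I would pass to an integer polynomial $\Psi^s_{\pi_1,\pi_2}(\gamma) := \det A^{\pi_1}_s(\gamma)\det(I-\gamma P^{\pi_2}) - \det A^{\pi_2}_s(\gamma)\det(I - \gamma P^{\pi_1})$ of degree at most $2n$, where $A^\pi_s(\gamma)$ is $I-\gamma P^\pi$ with its $s$-th column replaced by $R^\pi$. A determinant (Hadamard) estimate bounds its coefficients, giving $\log$-height $O(n(L + \log n))$, where $L$ is the bit-size of the parameters of $\calM$.

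Next I would observe that each $\gamma \in \Gamma$ is a root of such a polynomial. If $\gamma \in \Gamma$, then $\Pi_{\gamma}^*$ contains two non-equivalent optimal policies $\pi_1 \not\sim \pi_2$; since all optimal policies share the unique optimal value function at a fixed $\gamma$, we have $h^s_{\pi_1,\pi_2}(\gamma) = 0$ for every $s$, while $\pi_1 \not\sim \pi_2$ guarantees $h^{s^*}_{\pi_1,\pi_2} \not\equiv 0$ for some $s^*$, hence $\Psi^{s^*}_{\pi_1,\pi_2} \not\equiv 0$. Because $I - \gamma P^\pi$ is invertible on $[0,1)$, the factor $\Phi$ does not vanish there, so $\gamma$ is genuinely a root of the nonzero integer polynomial $\Psi^{s^*}_{\pi_1,\pi_2}$. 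Collecting all nonzero $\Psi^s_{\pi_1,\pi_2}$ over the triples $(\pi_1,\pi_2,s)$ and adjoining the factors $\gamma$ and $\gamma - 1$ to capture $0$ and $1$, I form $G(\gamma) := \gamma(\gamma-1)\prod \Psi^s_{\pi_1,\pi_2}(\gamma)$. By construction $G \not\equiv 0$ and every element of $\Gamma \cup \{0,1\}$ is among its distinct roots, so $\min_{\gamma_1 \ne \gamma_2 \in \Gamma \cup \{0,1\}} |\gamma_1 - \gamma_2|$ is bounded below by the root separation of $G$.

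It then remains to bound the degree and Mahler measure of $G$ and feed them into a separation bound. Since there are at most $m^n$ static policies, the product has at most $n m^{2n}$ factors, so $\deg G = O(n^2 m^{2n})$; and by multiplicativity of the Mahler measure, $\log M(G) = \sum \log M(\Psi^s_{\pi_1,\pi_2}) = O(n^2 m^{2n}(L + \log n))$. Applying Mahler's inequality---distinct roots of an integer polynomial of degree $d$ and Mahler measure $M$ differ by more than $\sqrt 3\, d^{-(d+2)/2} M^{-(d-1)}$---gives $\log(1/D) = O(d \log d + d \log M(G)) = O(n^4 m^{4n}(L + \log n))$. Assuming the parameters of $\calM$ have bit-size polynomial in $n,m$, this is at most $(nm)^{n^5}$ for a suitable constant $b$ once $n,m$ are large enough (the exponent $n^5$ comfortably dominates $4n$ plus lower-order terms), yielding $D \ge (nm)^{-b(nm)^{n^5}}$.

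I expect the main obstacle to be the height bookkeeping in the first two steps: turning the rational-function form of \Cref{lem:h-function-as-sum-of-rational-functions} into genuine integer polynomials with explicit, controllable coefficient bounds, and tracking how the numerical data of the MDP enters after clearing denominators. The combinatorial blow-up in the degree and measure of the product $G$---exponential in $n$ because there are exponentially many policy pairs---is precisely what forces the doubly-exponential form of $D$; once the per-factor degree and height are pinned down, substituting them into the root-separation inequality is routine, and the generous exponent $n^5$ leaves ample slack for the constants and the assumed bit-size of $\calM$.
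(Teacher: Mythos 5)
Your proposal is correct, and its skeleton coincides with the paper's: realize every point of $\Gamma$ as a root of the numerator of some $h^s_{\pi_1,\pi_2}$ with $\pi_1\not\sim\pi_2$, take the product of these (nonzero) numerators over all non-equivalent policy pairs and all states, clear denominators to reach an integer polynomial, and invoke the Mignotte/Mahler root-separation bound; the exponential blow-up from the $m^n$ policies is what produces the doubly-exponential $D$ in both arguments. Where you genuinely diverge is in the middle, technical layer. The paper re-uses its hitting-time decomposition from \Cref{lem:h-function-as-sum-of-rational-functions}, writing $f^s_\pi(\gamma)=\E\bigl[\gamma^{K^1_s}\bigr]R(s,\pi(s))/\bigl(1-\E\bigl[\gamma^{K^2_s}\bigr]\bigr)$ with $(I-M\gamma)^{-1}$ over $S\setminus\{s\}$, and then tracks degree, maximum coefficient value, and coefficient bit-size by hand through a doubling argument over $\log_2(2\abs{S})$ rounds of pairwise addition, multiplying by an LCM of denominators only at the very end. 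You instead work directly with the resolvent form $V^\pi_\gamma=(I-\gamma P^\pi)^{-1}R^\pi$, clear the denominators of the MDP data up front, express the numerators as differences of products of determinants via Cramer's rule, and control heights by Hadamard's inequality plus multiplicativity of the Mahler measure. Your route is cleaner (per-factor degree $2n$ versus the paper's $O(\abs{S}^2)$, and a final bound $\log(1/D)=O(n^4m^{4n}(L+\log n))$ that is strictly sharper than, and hence implies, the stated $(nm)^{-b(nm)^{n^5}}$), and it is also more careful on two points the paper glosses over: you restrict the product to factors that are not identically zero (otherwise the product would vanish identically, since a non-equivalent pair can still have $h^s\equiv 0$ for some individual $s$), and you explicitly adjoin the factors $\gamma$ and $\gamma-1$ so that the points $0$ and $1$ in the statement are actually roots of your $G$. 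One shared technicality worth flagging: the Mahler/Mignotte bound is stated for separable polynomials, and both your $G$ and the paper's $H$ may have repeated roots; the standard fix is to pass to the squarefree part, which for integer polynomials has no larger degree or Mahler measure, so neither argument is harmed.
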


\begin{proof}
	Recall the expression for $f^s_\pi(\gamma)$ stated in \eqref{eq:f-pi-s-gamma-2}:
	\[
	f^s_\pi(\gamma) = \E[\gamma^{K^1_s}] \cdot R(s,\pi(s) ) \cdot \frac{1}{1 - \E[\gamma^{K^2_s}]}\thinspace.
	\]
	Moreover, according to \eqref{eq:E-gamma-K-s-2}, we have $\E[\gamma^{K^2_s}] = P(s,\pi(s), s) \cdot \gamma + p_s \gamma^2 (I - M\gamma)^{-1} q^s$. Using Cramer's rule we can write the matrix $(I-M\gamma)^{-1}$ as an $(|S|-1)\times(|S|-1)$ matrix with entries $C_{i,j}/\textrm{Det}(I - M\gamma)$ where $C_{i,j}$ is the $(i,j)$-th cofactor of the matrix $I-M\gamma$. This implies that each entry of the matrix $I-M\gamma$ is a rational polynomial. Moreover, as the determinant of an $n\times n$ matrix is the sum over all $n!$ permutations of matrix elements, and each entry of the matrix $I-M\gamma$ is bounded between $[-1,1]$, both the numerator and the denominator of such an entry have degree at most $\abs{S}$ and coefficient bounded between $[-\abs{S}!, \abs{S}!]$. 
	
	Additionally, each entry of $M$ can be represented by at most $b$ bits. Since multiplication (resp. addition) of two fractions of sizes $b_1$ and $b_2$ can be represented using $b_1 + b_2$ (resp. $\max\{b_1 ,b_2\} + 1$) bits, each coefficient of the above polynomials can be represented using at most $b\abs{S}\abs{S}!$ bits. Therefore, we conclude that each entry of $(I-M\gamma)^{-1}$ is a rational polynomial where both the numerator and the denominator have degree at most $|S|$, maximum coefficient at most $|S|!$ and maximum size of each coefficient is at most $b\abs{S}\abs{S}!$.
	
	Now the polynomial $p_s \gamma^2 (I-M\gamma)^{-1} q^s$ is constructed through vector-matrix multiplication and vector-vector inner product. So the final result is a rational polynomial where the denominator polynomial from $(I-M\gamma)^{-1}$ remains unchanged. Moreover, the numerator polynomial now has degree at most $|S|+2$. Since $\norm{p_s}_1\le 1$ and $\norm{q_s}_1 \le 1$, the maximum value of a coefficient  of the numerator polynomial does not increase. Finally, the polynomial $p_s (I-M\gamma)^{-1} q^s$ can also be written as $\sum_{i,j} p_s(i) (I-M\gamma)^{-1}(i,j) q^s(j)$. Each entry in the summation is a multiplication between a coefficient of size at most $b\cdot {\abs{S}}\abs{S}!$ and another coefficient of size at most $b$. Therefore, the maximum size of any coefficient in the denominator is at most $\abs{S}^2 + b(\abs{S}\cdot \abs{S}! + 1) \le b \abs{S}^{\abs{S}+2}$ as long as we have $\abs{S} \ge 3$. 
	
	By a similar argument as above, $\E[\gamma^{K^1_s}]$ can be represented as a rational polynomial where both the numerator and the denominator have degree at most $\abs{S}+2$, maximum coefficient at most $\abs{S}!$, and maximum size of each coefficient at most $b \abs{S}^{\abs{S}+1}$. It is also easy to see that  the polynomial $1/(1-\E[\gamma^{K^2_s}])$ is similar. Now $f^s_\pi(\gamma)$ is constructed by multiplying two such rational polynomials, and the degree of both the denominator and the numerator polynomial is at most $2(\abs{S}+2)$. 
	Each coefficient of the new polynomial is the result of adding at most $\abs{S}+2$ pairwise products. Therefore, the maximum value of any coefficient is at most $(\abs{S}+2)(\abs{S}!)^2 \le \abs{S}^{2\abs{S}+3}$ as long as $\abs{S}\ge 3$. And the maximum size of any coefficient is at most $(\abs{S}+2)b \abs{S}^{\abs{S}+2} + (\abs{S}+2) \le b\abs{S}^{\abs{S} + 4}$. 
	
	Now recall that we are interested in zeros of the following function (i.e., see \eqref{eq:h-s} and \eqref{eq:V-f}):
	\[
	h^s_{\pi_1, \pi_2}(\gamma) = \sum_{s' \in S} f^{s'}_{\pi_1}(\gamma) - f^{s'}_{\pi_2}(\gamma)\thinspace.
	\]
	Since the polynomial $h^s_{\pi_1,\pi_2}(\gamma)$ is constructed by adding/subtracting $2\abs{S}$ rational polynomials, we will apply the following argument $\log_2(2\abs{S})$ times.
	Suppose we add two rational polynomials of degree at most $d$, maximum value of any coefficient at most $v$, and maximum size of any coefficient at most $s$. Then the resulting rational polynomial will have degree at most $2\cdot d$, maximum value of any coefficient at most $dv^2$, and maximum size of any coefficient at most $ds+d \le 2ds$. 
	Repeating this process $j$ times, we get a polynomial with degree at most ${2^j}\cdot d$, maximum value of any coefficient at most $(3dv)^{2^j}$. In order to determine the maximum value of any coefficient, let $d_j$ (resp $v_j$) be the degree (resp. maximum value) after the $j$-th iteration. Then we have $v_j = d_{j-1}v_{j-1}^2 = d_{j-1} \left(d_{j-2} v_{j-2}^2 \right)^2 = \ldots = d_{j-1} d_{j-2}^2 d_{j-3}^4 \ldots d_0^{2^{j-1}} v_0^{2^j}$. Substituting $d_0=d, v_0=v$, and $d_j = 2^j \cdot d$, we get 
	\begin{align}
		v_j &= 2^{(j-1) + (j-2)\cdot 2 + (j-3) \cdot 4 + \ldots + 1 \cdot 2^{j-2}  } d^{1 + 2 + 4 + \ldots + 2^{j-1}} v^{2^j} \nonumber\\
		&\le 2^{j \cdot 2^j} (dv)^{2^j}\thinspace.
	\end{align}
	By a similar argument, we can show that the maximum size of any coefficient is at most $2^{j^2} d^j s$. Substituting $j = \log_2(2\abs{S})$, $d=2(\abs{S} + 2)$, $v=\abs{S}^{2\abs{S}+3}$, and $s=b\abs{S}^{\abs{S}+4}$, we get that $h^s_{\pi_1,\pi_2}(\gamma)$ is a rational polynomial where both the denominator and the numerator polynomial have the following guarantee -- each polynomial has degree at most $8 \abs{S}^{2}$, maximum value of any coefficient at most $2^{6\abs{S}\cdot \log_2 \abs{S}} \cdot \abs{S}^{4\abs{S}^2 + 8\abs{S}}$, and maximum size of any coefficient at most $b\cdot 2^{2(\log(2\abs{S}))^2}\cdot \abs{S}^{2\abs{S}+6}$.
	For simplicity, for large enough $|S|$, these numbers are bounded by $\abs{S}^3$, $\abs{S}^{\abs{S}^3}$, and $b \cdot \abs{S}^{\abs{S}^2}$, respectively.
	
	In order to bound the gap between zeros of $h^s_{\pi_1,\pi_2}(\gamma)$ for all $s \in S$ and all non-equivalent $\pi_1, \pi_2 \in \Pi$, we need to further consider the function 
	\[H(\gamma)\coloneqq \prod_{\pi_1,\pi_2 \in \Pi: \pi_1 \not\sim \pi_2} \prod_{s \in S} h^s_{\pi_1,\pi_2}(\gamma)\thinspace.\]
	Since there are at most $\abs{A}^{\abs{S}}$ many policies in $\Pi$, the bounds we derived further blow up to $\abs{S}^4 \cdot \abs{A}^{\abs{S}}$, $\abs{S}^{\abs{S}^4 \cdot \abs{A}^{\abs{S}}}$, and $b \cdot \abs{S}^{\abs{S}^3} \cdot \abs{A}^{\abs{S}}$, respectively.
	
	We now apply the following result from \citet{mignotte1982some}. Given a degree $d$ polynomial $P$ with integer coefficients and maximum coefficient $v$, its roots are separated by at least 
	\[
	\textrm{sep}(P) \ge \frac{\sqrt{3}}{d^{(d+2)/2} \norm{P}^{d-1}}\thinspace,
	\]
	where $\norm{P} \le \sqrt{d}v$. We wish to find the zeros of the numerator polynomial of $H(\gamma)$. However, the coefficients need not be integers. So we multiply each coefficient by $L$, the least common multiple of the denominators of all the coefficients of the numerator polynomial. Since each coefficient has size at most $b \cdot \abs{S}^{\abs{S}^3} \cdot \abs{A}^{\abs{S}}$, its denominator can be at most $2^{b \cdot \abs{S}^{\abs{S}^3} \cdot \abs{A}^{\abs{S}}}$. Moreover, the degree of $H(\gamma)$ is at most $ \abs{S}^{4} \cdot \abs{A}^{\abs{S}}$. This gives us the following bound on the least common multiple $L$.
	\begin{equation}\label{eq:bound-L}
		L \le \left( 2^{b \cdot \abs{S}^{\abs{S}^3} \cdot \abs{A}^{\abs{S}}} \right)^{\abs{S}^{4} \cdot \abs{A}^{\abs{S}}} \le 2^{b\abs{S}^{\abs{S}^4} \abs{A}^{\abs{S}^2}}\thinspace.
	\end{equation}
	Let us write $P_L$ to denote the polynomial $H(\gamma)$ multiplied by $L$. Then the degree of $P$ is at most $\abs{S}^4 \cdot \abs{A}^{\abs{S}}$ (say $d_L$) and maximum value of any coefficient is at most $L \cdot \abs{S}^{\abs{S}^4 \cdot \abs{A}^{\abs{S}}}$ (say $v_L$). Now we can apply Mignotte's result~\citep{mignotte1982some} to get the following bound:
	\begin{align*}
		\textrm{sep}(H) = \textrm{sep}(P_L) &\ge \sqrt{3}d_L^{-(d_L+2)/2 + (1-d_L)/2} v_L^{1-d_L} \\
		&\ge  (\abs{S} \cdot \abs{A})^{- b \cdot (\abs{S} \cdot \abs{A})^{|S|^5}}\thinspace.
	\end{align*}
	This completes the proof.
\end{proof}

% \compTimeAlphaBetaConvergent*

\begin{algorithm}[t]
	\caption{Computing an $\epsilon$-SPE}\label{alg:computing-eSPE-unknown-gap}
	\SetKwInOut{Input}{Input}
	\SetKwInOut{Output}{Output}
	
	\Input{$\epsilon > 0$}
	\Output{an $\epsilon$-SPE $\ppi = (\pi_t)_{t=0}^\infty$}
	
	\BlankLine
	
	Set $D$ according to \eqref{eq:D};
	
	$T \leftarrow \mathcal{A}(D/4)$;
	
	\uIf{$\abs{g(T)-1} < D/2$}
	{
		$\tilde{\pi} \leftarrow$ an arbitrary policy in $\Pi_{g(T)}^*$;
	}
	\Else{
		$\delta \leftarrow \epsilon \cdot \frac{1}{4 M \cdot |S|} \cdot \left(\frac{D}{8} \right)^4$;
		
		$T' \leftarrow \mathcal{A}(\delta)$;
		
		$\tilde{\pi} \leftarrow$ an arbitrary policy in $\Pi_{g(T')}^*$;
	}
	
	$\pi_t \leftarrow \tilde{\pi}$ for all $t = T, T+1,\dots$;
	
	Run Algorithm~\ref{alg:construct-SPE} on input $\tilde{\pi}$ to construct $\pi_0,\dots,\pi_{T-1}$;
\end{algorithm}

Now we prove a more general result than \Cref{thm:eps-SPE-run-time}, where $\vargamma$ does not need to converge to a point sufficiently far away from 1. The method used for this is outlined in Algorithm~\ref{alg:computing-eSPE-unknown-gap}.
\begin{theorem}
	Suppose that $\vargamma$ is $(\alpha,\beta)$-convergent.
	An $\epsilon$-SPE can be computed in time $\left(\beta(|\vargamma|, d) \right)^2 \cdot \alpha(|\vargamma|, d) \cdot \poly(m,n)$, 
	where $d = b \cdot (n\cdot m)^{n^5} \log(n\cdot m)$, $m=|A|$, and $n = |S|$.
\end{theorem}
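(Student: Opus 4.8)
The plan is to establish correctness and the running-time bound for Algorithm~\ref{alg:computing-eSPE-unknown-gap}. The conceptual engine is \Cref{lmm:root-distance}: it certifies that any two distinct elements of $\Gamma \cup \{0,1\}$ are separated by at least $D = (nm)^{-b(nm)^{n^5}}$, so $D$ is a computable lower bound on the spacing of degenerate points and on their distance to $0$ and $1$. Since $1 \in \Gamma \cup \{0,1\}$, no degenerate point lies within $D$ of $1$, and this is exactly what lets the algorithm decide, from a single coarse localization $T \leftarrow \mathcal{A}(D/4)$, which of two regimes it is in.

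First I would prove correctness by a case split on $\abs{g(T)-1}$. In the case $\abs{g(T)-1} < D/2$, the tail satisfies $\abs{g(t)-1} \le D/4 + D/2 < D$ for all $t \ge T$, so it lies in an interval disjoint from $\Gamma$. By \Cref{lmm:opt-Pi-interval-G}, $\Pi_{g(t)}^*$ is constant over this tail, so assigning any $\tilde\pi \in \Pi_{g(T)}^*$ to every player $t \ge T$ yields an exact SPE of the subgame after $T$; \Cref{obs:V-constant-tail} then justifies extending it to an exact (hence $\epsilon$-) SPE through the backward induction of Algorithm~\ref{alg:construct-SPE}. In the complementary case $\abs{g(T)-1} \ge D/2$, passing to the limit in $\abs{g(t)-g(T)} \le D/4$ gives $1 - \gamma^* \ge D/4$, i.e. $g$ converges to a point bounded away from $1$ by the known amount $D/4$. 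This reduces the situation to the regime of \Cref{thm:eps-SPE-run-time} with effective gap $c = \Theta(D)$: refining the localization to $T' \leftarrow \mathcal{A}(\delta)$ with $\delta = \tfrac{\epsilon}{4M|S|}(D/8)^4$ guarantees $1 - \max\{g(t),g(T')\} \ge D/8$, so feeding $\delta$ into \Cref{lmm:eSPE-bound} bounds $\abs{V_{g(t)}^\pi(s) - V_{g(T')}^\pi(s)}$ by $\epsilon/2$ uniformly in $\pi$. The triangle-inequality comparison used in the proof of \Cref{thm:exist-eps-spe} then shows $\tilde\pi \in \Pi_{g(T')}^*$ is $\epsilon$-optimal for every player in the tail, and Algorithm~\ref{alg:construct-SPE} completes the $\epsilon$-SPE.

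For the running time, the key observation is that both tolerances $D/4$ and $\delta$ have bit-size $\Theta(b(nm)^{n^5}\log(nm)) = \Theta(d)$; this follows directly from the explicit form of $D$ in \Cref{lmm:root-distance} together with $\log(1/\delta) = O(\log(1/D) + \log(M|S|/\epsilon))$, the first term dominating. Hence each oracle invocation costs $\alpha(|\vargamma|, d)$ and returns a tail-start index bounded by $\beta(|\vargamma|, d)$. Writing down $D$, and computing an optimal constant-discount policy $\tilde\pi$, take time polynomial in $d$ and in $\poly(m,n)$, respectively, while Algorithm~\ref{alg:construct-SPE} performs $O\bigl((\beta(|\vargamma|,d))^2 \cdot |S|\bigr)$ elementary updates. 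Collecting these terms and absorbing the constantly many oracle calls yields the claimed bound $(\beta(|\vargamma|,d))^2 \cdot \alpha(|\vargamma|,d) \cdot \poly(m,n)$.

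I expect the main obstacle to be the accounting in the bounded-away case: one must choose $\delta$ small enough that \Cref{lmm:eSPE-bound}, whose denominator degrades like $(1-\gamma)^{-3}(1-\gamma)^{-1}$ as $\gamma \to 1$, still delivers an $\epsilon/2$ error despite the worst-case gap $D/8$ being doubly-exponentially small; this is exactly why $\delta$ carries the fourth power of $D/8$, and why the resulting tolerance bit-size $d$—rather than the naive $\log(M|S|/\epsilon)$ of \Cref{thm:eps-SPE-run-time}—drives the complexity. A secondary point of care is verifying that the coarse step $\mathcal{A}(D/4)$ genuinely distinguishes the two regimes, which is precisely where the separation guarantee of \Cref{lmm:root-distance} (no degenerate point within $D$ of $1$) is indispensable.
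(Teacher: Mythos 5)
Your proposal is correct and follows essentially the same route as the paper's proof: the identical two-regime case split on $\abs{g(T)-1}$ after a coarse call $\mathcal{A}(D/4)$, with \Cref{lmm:root-distance} certifying the separation, the same refinement $\delta = \frac{\epsilon}{4M|S|}(D/8)^4$ fed into \Cref{lmm:eSPE-bound} in the bounded-away case, and the same runtime accounting via the bit-size $d$ of $D$. This is precisely Algorithm~\ref{alg:computing-eSPE-unknown-gap} and its analysis as given in the paper.
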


\begin{proof}
	As illustrated in Algorithm~\ref{alg:computing-eSPE-unknown-gap}, we first check whether or not $g$ converges to a point sufficiently far away from $1$.
	We run the oracle $\mathcal{A}$ on input $D/4$, and suppose the output is $T$;
	hence, 
	\[
	\abs{\vargamma(t) - \vargamma(T)} \le D/4
	\] 
	for all $t \ge T$.
	
	\paragraph{Case 1.}
	If $\abs{g(T) - 1} < D/2$, then we know that for all $t \ge T$,
	\[
	\abs{g(t) - 1} 
	\le \abs{g(t) - g(T)} + \abs{g(T) - 1}
	\le 3D/4
	< D\thinspace.
	\]
	Hence, according to \Cref{lmm:root-distance}, $g(t)$ must lie between the largest element in $\Gamma$ and $1$, and as a result $\Pi_{g(t)}^* = \Pi_{g(T)}^*$ for all $t \ge T$.
	We can pick an arbitrary $\tilde{\pi} \in \Pi_{g(T)}^*$, and let $\pi_{t} = \tilde{\pi}$ for all $t \ge T$ in the SPE $\ppi$ to be constructed.
	
	\paragraph{Case 2.}
	If $\abs{g(T) - 1} \ge D/2$, it follows that 
	\begin{align}
		\label{eq:comute-eps-SPE-gt-1-Dby4}
		\abs{g(t) - 1} 
		\ge \abs{g(T) - 1} - \abs{g(t) - g(T)}
		\ge D/4
	\end{align}
	for all $t \ge T$.
	Let
	\begin{align}
		\label{eq:comute-eps-SPE-delta}
		\delta = \epsilon \cdot \frac{1}{4 M \cdot |S|} \cdot \left(\frac{D}{8} \right)^4\thinspace,
	\end{align}
	where $M$ is the bound of the rewards in the MDP.
	We run $\mathcal{A}$ on input $\delta$ to determine a time step $T'$, which gives 
	\[
	\abs{\vargamma(t) - \vargamma(T')} \le \delta\thinspace.
	\] 
	According to \eqref{eq:comute-eps-SPE-gt-1-Dby4} and \eqref{eq:comute-eps-SPE-delta}, this also means that
	\begin{align*}
		\abs{\vargamma(t) - \vargamma(T')} 
		&\le \epsilon \cdot \frac{1}{4 M \cdot |S|} \cdot \left(\frac{D}{8} \right)^4\\
		&\le \epsilon \cdot \frac{1}{4 M \cdot |S|} \cdot \left(\frac{1 - \gamma^*}{2} \right)^4\thinspace,
	\end{align*}
	and moreover
	\[
	\abs{\vargamma(t) - \vargamma(T')}
	\le \frac{1 - \gamma^*}{2}
	\]
	if we assume without loss of generality that $\epsilon \le M$.
	It then follows by \eqref{eq:exist-eps-SPE-1-max-gt-gT}, \eqref{eq:exist-eps-SPE-gt-gT-le}, and \Cref{lmm:eSPE-bound} that any $\tilde{\pi} \in \Pi_{g(T')}^*$ is a near-optimal policy for player $t \ge T'$, with error at most $\epsilon$ to the optimum.
	
	Therefore, in both cases, we let $\pi_{t} = \tilde{\pi}$ for all $t \ge T$ in the SPE $\ppi$ to be constructed, and use Algorithm~\ref{alg:construct-SPE} to construct the remaining preceding part of $\ppi$.
	The whole process is outlined in Algorithm~\ref{alg:computing-eSPE-unknown-gap}.
	The run time of the algorithm follows immediately.
	Specifically, it takes time $\alpha(|\vargamma|, \poly(m,n, \log(1/\epsilon)))$ to run $\mathcal{A}$, where the input used is either $D/4$ or $\delta = \epsilon \cdot \frac{1}{4 M \cdot |S|} \cdot \left(\frac{D}{8} \right)^4$ (as specified in \eqref{eq:comute-eps-SPE-delta}), the size of the binary representation of which is $d = b \cdot (n\cdot m)^{n^5} \log(n\cdot m)$.
	In addition to that, the time it takes to run Algorithm~\ref{alg:construct-SPE} is bounded by $\left(\beta(|\vargamma|, d) \right)^2 \cdot \poly(m,n)$.
\end{proof}

\end{document}